\documentclass{article}

\usepackage{microtype}
\usepackage{graphicx}
\usepackage{subfigure}
\usepackage{booktabs} 

\usepackage[hyphens]{url}
\usepackage{hyperref}
\hypersetup{breaklinks=true}

\usepackage[accepted]{icml2026}

\usepackage{amsmath}
\usepackage{amssymb}
\usepackage{mathtools}
\usepackage{amsthm}
\usepackage{csquotes}

\usepackage[capitalize,noabbrev]{cleveref}

\theoremstyle{plain}
\newtheorem{theorem}{Theorem}[section]

\theoremstyle{definition}
\newtheorem{definition}[theorem]{Definition}

\theoremstyle{remark}

\usepackage[textsize=tiny]{todonotes}
\usepackage{caption}

\usepackage{booktabs}
\usepackage{xcolor}    
\usepackage{pifont}    
\usepackage{makecell}

\usepackage{bm}
\usepackage{multirow}
\usepackage{colortbl}
\definecolor{color1}{rgb}{0.9764,0.8039,0.6784}

\newcommand{\cmark}{\textcolor{green}{\ding{51}}} 
\newcommand{\xmark}{\textcolor{red}{\ding{55}}}   

\icmltitlerunning{Expectation Consistency Loss for Confidence Calibration under Covariate Shift}

\begin{document}
	
\twocolumn[
\icmltitle{Expectation Consistency Loss: Rethink Confidence Calibration under Covariate Shift}



\icmlsetsymbol{equal}{*}

\begin{icmlauthorlist}
	\icmlauthor{Jinzong Dong}{sch}
	\icmlauthor{Zhaohui Jiang}{sch}
	\icmlauthor{Bo Yang}{sch}
\end{icmlauthorlist}

\icmlaffiliation{sch}{School of Automation, Central South University, Changsha, China}

\icmlcorrespondingauthor{Zhaohui Jiang}{jzh0903@csu.edu.cn}

\icmlkeywords{Machine Learning, ICML}
\vskip 0.3in
]



\printAffiliationsAndNotice{}  

\begin{abstract}
	Confidence calibration for classification models is vital in safety-critical decision-making scenarios and has received extensive attention. General confidence calibration methods assume training and test data are independent and identically distributed ($i.i.d.$), limiting their effectiveness under covariate shifts. Previous calibration methods under covariate shift struggle with class-wise or canonical calibrations and often rely on unstable importance weighting when density ratios are large or unbounded. Given the above limitations, this paper rethinks confidence calibration under covariate shifts. First, we derive a necessary and sufficient condition for confidence calibration under covariate shifts, named \textit{Expectation consistency condition}, which reveals covariate shifts do not necessarily lead to uncalibrated confidence and provides a weaker condition for confidence calibration than global covariate distribution alignment. Then, utilizing \textit{Expectation consistency condition}, this paper proposes an unsupervised domain adaptation loss to calibrate confidence of the target domain, named \textit{Expectation consistency loss} (ECL), which is compatible with canonical calibration, class-wise calibration, and top-label calibration. Third, we prove that computing ECL loss has the same sample complexity as Expected Calibration Error (ECE) and provide a theoretically grounded mini-batch trainable scheme for ECL loss. Finally, we validate the effectiveness of our method on both simulated and real-world covariate shift datasets.
\end{abstract}

\section{Introduction}
\label{Introduction}

Modern machine learning classification models, such as deep neural networks, are becoming increasingly accurate and widely applied in safety-critical fields \citep{LeCun2015,JIANG2023105849}. Nevertheless, decision-making systems in such fields need not only high accuracy but also the ability to recognize when they might be wrong \citep{NEURIPS2023_e271e30d}. For example, in automatic disease diagnosis, if a model has low confidence in its prediction, it should defer to a medical professional \citep{jiang2012calibrating}. Thus, classification models should provide accurate confidence estimates alongside their predictions to reflect the true likelihood of an event. Accurate confidence is more informative than mere class labels, e.g., stating \enquote{a patient has a 70\% probability of having cancer} gives doctors more actionable information than just labeling the condition as \enquote{cancer}. Moreover, accurate confidence facilitates classification models to better integrate with other probabilistic models, e.g., helping active learning to select more representative samples \citep{HAN2024110385} and improving the generalization performance of knowledge distillation \citep{li2023distilling}. Therefore, pursuing more accurate confidence in classification models is of great importance \citep{Gawlikowski2023}.

In recent years, confidence calibration has emerged as one of the most effective methods for producing more reliable confidence estimates and has attracted considerable attention \citep{guo17a,pmlr-v119-zhang20k,NEURIPS2019_8ca01ea9,dong2024combining}. However, general confidence calibration methods typically assume that the target domain (or test set) and the source domain (or calibration set) are independent and identically distributed ($i.i.d.$). When this assumption is violated due to distribution shifts, calibration performance often deteriorates significantly \citep{10356834}. Covariate shift, a common type of data distribution shift, often occurs in real-world tasks like medical diagnosis across different populations or image recognition under varying lighting conditions, where the input data distribution of models changes while the underlying relationship between inputs and outputs remains consistent \citep{kimura2024a}. Under covariate shift, models calibrated on the source domain frequently fail to generalize to the target domain, resulting in unreliable confidence estimates \citep{10.5555/1577069.1755858}. This highlights the importance of developing confidence calibration methods that remain robust under covariate shift \citep{pmlr-v235-hu24i}.

Currently, the mainstream confidence calibration methods under covariate shift are based on importance weighting \citep{DBLP:journals/corr/abs-2006-16405, pmlr-v108-park20b,NEURIPS2020_df12ecd0,10.24963/ijcai.2023/162}, which adjusts the objective function by assigning weights based on the importance of instances from the source domain, thereby guiding the model to generalize to the target domain unbiasedly \citep{kimura2024a}. However, it is well known that importance weighting has been criticized for its instability when the density ratio is large or unbounded \citep{NIPS2010_59c33016}. \citet{pmlr-v235-hu24i} use mixup to synthesize pseudo-target data and generalize the calibration performance from the pseudo-target data to the target domain. However, the efficacy of this method hinges primarily on the degree of similarity between the pseudo-target data and the target domain data. Furthermore, existing methods primarily address the simplest prediction-based calibration (i.e., top-label calibration). To our knowledge, there remains a notable absence of class-wise and canonical calibration methods designed to handle covariate shift.

Importance weighting in confidence calibration aims to globally align covariate distributions, inspired by accuracy improvement under covariate shift. However, confidence calibration differs fundamentally from accuracy improvement: it requires not learning new knowledge, but precisely conveying uncertainty. This raises a natural but often neglected question: \textbf{Is global covariate distribution alignment necessary?} To answer this, we first derive a necessary and sufficient condition for confidence calibration under covariate shifts, termed the \textit{Expectation consistency condition}. This condition reveals that covariate shifts do not necessarily cause miscalibration and provides a weaker requirement than global distribution alignment. Based on this condition, we propose an unsupervised domain adaptation loss, \textit{Expectation consistency loss} (ECL), with three variants for canonical, class-wise, and top-label calibration. We prove that ECL has sample complexity ${\cal O}(B/{\varepsilon^2})$, comparable to histogram binning, where $B$ denotes the number of confidence bins. To enable unbiased gradient backpropagation on mini-batch data, we also provide a theoretically sound mini-batch training scheme for ECL. Finally, we validate the method on simulated and real-world covariate shift datasets.

\begin{table*}[t]
	\centering
	\caption{Comparison of ECL and related calibration methods.}
	\setlength\tabcolsep{7.0pt}
	\renewcommand{\arraystretch}{1}
	\begin{tabular}{lccccc}
		\toprule
		\textbf{Method} & \makecell{\textbf{Covariate} \\ \textbf{Shift}} & \makecell{\textbf{Class-wise}\\ \textbf{Calibration}}&\makecell{\textbf{Canonical}\\ \textbf{Calibration}}& \makecell{\textbf{Density} \\ \textbf{Ratio} \\ \textbf{Unbounded}} & \makecell{\textbf{Mini-batch} \\ \textbf{Trainable}} \\
		\midrule
		$SB$-$ECE$ \citep{NEURIPS2021_f8905bd3}&\xmark&\xmark& \xmark&\cmark&\xmark\\
		$DECE$ \citep{bohdal2023metacalibration}&\xmark&\xmark& \xmark&\cmark&\xmark\\
		$ECE^{KDE}$ \citep{NEURIPS2022_33d6e648}&\xmark&\cmark& \cmark&\cmark&\cmark\\
		\midrule
		$Weighted$ $TS$ \citep{DBLP:journals/corr/abs-2006-16405}&\cmark&\xmark& \xmark&\xmark&\xmark\\
		$FL+IW+Temp$ \citep{pmlr-v108-park20b}&\cmark&\xmark& \xmark&\xmark&\xmark\\
		$TransCal$ \citep{NEURIPS2020_df12ecd0}&\cmark&\xmark& \xmark&\xmark&\xmark\\
		$DRL$ \citep{10.24963/ijcai.2023/162}&\cmark&\xmark& \xmark&\xmark&\xmark\\
		$PseudoCal$ \citep{pmlr-v235-hu24i}&\cmark&\xmark& \xmark&\cmark&\xmark\\
		\midrule
		$ECL$ (Ours)&\cmark&\cmark&\cmark&\cmark&\cmark\\
		\bottomrule
	\end{tabular}
	\label{Motivational_comparison}
\end{table*}
\section{Background and Related Work}
Consider a $K$-class classification problem where $X \in \mathcal{X}$ denotes the input feature and ${Y} = (Y_{1}, \cdots, Y_{K}) \in \mathcal{Y}$ denotes the $K$-class one-hot encoded label variable, with $\mathcal{X} \subset \mathbb{R}^{d}$ and $\mathcal{Y} = \{e_{k}\}_{k=1}^{K}$, where $e_{k}$ is a unit vector whose $k$-th component is 1. Let $f: \mathcal{X} \to \mathcal{S} \subset \Delta_{K-1}$ be a probabilistic classifier, where $\Delta_{K-1}$ represents a ($K-1$)-dimensional simplex. The predicted confidence score vector is given by $S = f(X)=(f_{1}(X),\cdots,f_{K}(X))= (S_{1}, {\ldots}, S_{K}) \in \mathcal{S}$. 
In general, the true class scalar is $Y^{*}={{\mathop{\rm argmax}\nolimits} _k}{\{ {Y_k}\} _{1 \le k \le K}}$, the predicted class is defined as $\hat Y = {{\mathop{\rm argmax}\nolimits} _k}{\{ {S_k}\} _{1 \le k \le K}}$, and the confidence score of the predicted class is $\hat S = \max {\{ {S_k}\} _{1 \le k \le K}}$.

In covariate shift, let $P_{s}(\cdot)$ and $P_{t}(\cdot)$ denote the probability density (for continuous variables, e.g., $X$, $X|S$, $S|X$, and $X|Y$) or probability measure (for discrete variables, e.g., $Y$, $Y|S$ and $Y|X$) on the source domain and target domain, respectively. $P$ denotes either $P_{s}$ or $P_{t}$ in cases where distinguishing between the source and target domains is not required. Let $D_{s}$ and $D_{t}$ represent the source domain and target domain data, respectively. 
\subsection{Confidence Calibration}
Confidence calibration aims to match the predicted confidence vector with the true posterior probability of event occurrence. Formally, we state:
\begin{definition}
	\textnormal{\textbf{(Perfect Calibration)}} A classifier is perfectly calibrated if the following equation holds:
	\begin{equation}
		P(Y_{k} = 1|S = s) = s_{k}, \forall 1 \le k \le K,
		\label{top_calibrated_eq}
	\end{equation}
	where $s=(s_{1},\cdots,s_{K})$ is the observed confidence score vector on $S$.
	\label{Multi_calibration}
\end{definition}

\textbf{Remark: }Definition \ref{Multi_calibration} considers the most stringent calibration paradigm, named canonical calibration \citep{Dong_2025}. Appendix \ref{calibration_paradigm} provides two other common calibration paradigms: top-label calibration \citep{guo17a} and class-wise calibration \citep{NEURIPS2019_8ca01ea9}. 

Existing general work primarily falls into two groups: train-time calibration \citep{Liu_2023_CVPR,NEURIPS2019_f1748d6b,9324926,Hebbalaguppe_2022_CVPR,Grathwohl2020Your,Yang_2021_ICCV} and post-hoc calibration \citep{guo17a, NEURIPS2019_8ca01ea9, pmlr-v119-zhang20k, NEURIPS2020_9bc99c59, gupta2021calibration, dong2024combining}. Train-time calibration typically carries out calibration during the classifier's training by adjusting the objective function, and post-hoc calibration learns a transformation (referred to as a calibration map) of the classifier's output on a calibration dataset in a post-hoc manner. However, these methods' effectiveness hinges on the $i.i.d.$ assumption between the target and source domains. When covariate shift occurs, this i.i.d. assumption is violated, making it difficult for the methods above to effectively calibrate confidence.

\subsection{Confidence Calibration under Covariate Shift}
In covariate shift, the target domain and the source domain have different feature distributions but the same conditional distributions. Formally, we state:
\begin{definition}
	\textnormal{\textbf{(Covariate Shift)}} Covariate shift occurs if the following two conditions are satisfied: ${P_{s}(X) \ne P_{t}(X)}$ and ${P_{s}(Y|X) = P_{t}(Y|X)}$.
	\label{Covariate_shift}
\end{definition}

Table \ref{Motivational_comparison} summarizes the characteristics of related calibration methods in five key dimensions, including whether they can handle covariate shifts, whether they support class-wise/canonical calibration, whether they can handle unbounded density ratios, and whether they are theoretically mini-batch trainable. As shown, existing methods often cover only a portion of the capabilities. In contrast, our ECL satisfies all dimensions simultaneously, demonstrating the method's comprehensiveness and versatility.

\section{Method}
\subsection{Expectation Consistency Condition}
Previous studies \citep{DBLP:journals/corr/abs-2006-16405,NEURIPS2020_df12ecd0,pmlr-v108-park20b,10.24963/ijcai.2023/162,pmlr-v235-hu24i} have empirically demonstrated that covariate shift can cause the confidence calibrated on the source domain to be uncalibrated on the target domain. However, empirical evidence alone cannot capture all possible scenarios. The theoretical underpinnings of these observations deserve to be explored to support this problem further and help solve it. To address this, this paper derives a necessary and sufficient condition for confidence calibration under covariate shift, as shown in Theorem \ref{Effect}.

\begin{theorem}
	\textnormal{\textbf{(Expectation Consistency Condition)}} $\forall 1 \le k \le K$, $P_{s}(Y_{k}=1|S)=P_{t}(Y_{k}=1|S)$ if and only if: ${\mathbb{E}_{X \sim {P_s}(X|S)}}[P(Y_{k}=1|X)] = {\mathbb{E}_{X \sim {P_t}(X|S)}}[P(Y_{k}=1|X)]$, where ${P(Y_{k}=1|X)} = {P_{s}(Y_{k}=1|X)} = {P_{t}(Y_{k}=1|X)}$. The proof is provided in Appendix \ref{proof_of_ECC}.
	\label{Effect}
\end{theorem}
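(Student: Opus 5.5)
The plan is to show that, in each domain separately, $P_{d}(Y_{k}=1|S)$ equals the expectation of the shared posterior $P(Y_{k}=1|X)$ under $P_{d}(X|S)$, for $d\in\{s,t\}$. Once this identity is in hand, the two sides of the equivalence in Theorem~\ref{Effect} are literally the same quantities, so both directions follow at once.

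The key structural fact is that $S=f(X)$ is a deterministic function of $X$. Hence, in either domain, $Y$ is conditionally independent of $S$ given $X$:
\begin{equation*}
P_{d}(Y_{k}=1|X,S)=P_{d}(Y_{k}=1|X).
\end{equation*}
Fix $k$ and a value $s$ of $S$. By the law of total probability, marginalizing over $X$ conditionally on $S=s$, we get
\begin{equation*}
P_{d}(Y_{k}=1|S=s)=\int P_{d}(Y_{k}=1|X=x,S=s)\,P_{d}(x|S=s)\,dx=\int P_{d}(Y_{k}=1|X=x)\,P_{d}(x|S=s)\,dx .
\end{equation*}
Definition~\ref{Covariate_shift} gives $P_{s}(Y|X)=P_{t}(Y|X)$, so the integrand's first factor equals the common $P(Y_{k}=1|X=x)$ in both domains. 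The identity therefore reads
\begin{equation*}
P_{d}(Y_{k}=1|S)=\mathbb{E}_{X\sim P_{d}(X|S)}[P(Y_{k}=1|X)].
\end{equation*}
Applying it with $d=s$ and $d=t$, equality of the left-hand sides for every $k$ holds if and only if equality of the right-hand sides holds, which is the claim.

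The main obstacle is rigor rather than algebra. Because $S=f(X)$, the conditional law $P_{d}(X|S=s)$ is supported on the level set $f^{-1}(s)$, so it is typically singular and not a density on $\mathbb{R}^{d}$. I would therefore state the argument with regular conditional distributions, so that the integral is taken against the measure $P_{d}(dx|S=s)$. The tower property $\mathbb{E}_{d}[Y_{k}|S]=\mathbb{E}_{d}[\mathbb{E}_{d}[Y_{k}|X]\,|\,S]$, valid because $\sigma(S)\subseteq\sigma(X)$, then gives the identity directly, avoiding the conditional-independence step altogether.

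I would also note that all equalities hold only almost surely. On the source side they hold $P_{s}(S)$-a.s., and on the target side $P_{t}(S)$-a.s., so the equivalence should be read on the common support of the two score distributions. Finally, the shared posterior must be well defined wherever either domain places mass, which is implicit in Definition~\ref{Covariate_shift}.
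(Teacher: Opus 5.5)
Your proposal is correct and follows the paper's proof. In each domain you derive $P_d(Y_k=1|S)=\mathbb{E}_{X\sim P_d(X|S)}[P(Y_k=1|X)]$ from the law of total probability and $P_d(Y_k=1|X,S)=P_d(Y_k=1|X)$, since $S=f(X)$, and then use the shared posterior under covariate shift to get both directions of the equivalence. Your points on regular conditional distributions, the tower property, and almost-sure or common-support qualifications make the argument more rigorous than the paper's, but they do not change the route.
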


\textbf{Remark on Theorem \ref{Effect}:} The source domain can usually be easily calibrated well using general calibration methods, at least much better than the target domain (see Appendix \ref{Source_and_Target}). Theorem \ref{Effect} tells us that as long as \textit{Expectation consistency condition} is met, the target domain can be calibrated as well as the source domain. Condition ${\mathbb{E}_{X \sim {P_s}(X|S)}}[P(Y_{k}=1|X)] = {\mathbb{E}_{X \sim {P_t}(X|S)}}[P(Y_{k}=1|X)]$ is strictly weaker than covariate distribution alignment (i.e., $P_{s}(X)=P_{t}(X)$), as it only requires equivalence in the expectations of the true posterior probability $P(Y_{k}=1|X)$ $w.r.t.$ the confidence score's level set distribution (i.e., $P_{s}(X|S)$ or $P_{t}(X|S)$), rather than matching the entire input distribution. For instance, even if $P_{s}(X)$ and $P_{t}(X)$ differ significantly, calibration may still hold if the model’s expected accuracy conditioned on $S$ aligns across domains. This insight moves the focus from aligning global covariate distributions to enforcing local consistency in critical statistics, enabling more efficient calibration strategies under covariate shift.

\textbf{Extension of Theorem \ref{Effect}:} Theorem \ref{Effect} can be naturally extended to top-label calibration and class-wise calibration (see Appendix \ref{Extension}). Intuitively, this only requires replacing the confidence score vector $S$ in Theorem \ref{Effect} with the predicted class confidence $\hat{S}$ or the confidence score vector's components $S_{k}$.

\textbf{An Example:} Fig. \ref{fig:side:a} shows an example of Theorem \ref{Effect}, where covariate shift occurs but calibration error remains unchanged. Take $S_{1}=0.75$ as an example for calculation: 
\begin{equation}
	\begin{split}
        & {P_s}\left(Y_{1} = 1|S=(0.75, 0.25)\right)\\
        & = \sum\limits_{X \in \{  - 1,1\} } {P(Y_{1} = 1|X){P_s}(X|S=(0.75, 0.25))} \\
        & = \sum\limits_{X \in \{  - 1,1\} } {0.5 \cdot {P_s}(X|S=(0.75, 0.25))}  = 0.5.
    \end{split}
\end{equation}
Similarly, it is easy to calculate that ${P_t}(Y_{1} = 1|S=(0.75, 0.25))=0.5={P_s}(Y_{1} = 1|S=(0.75, 0.25))$. The same holds if 0.75 is replaced with other values because ${\mathbb{E}_{X \sim {P_s}(X|S)}}[P(Y_{1} = 1|X)] = {\mathbb{E}_{X \sim {P_t}(X|S)}}[P(Y_{1} = 1|X)]$ holds for $\forall S_{1} \in [0,1]$. Moreover, such examples are infinite because they include but are not limited to all examples where $P(Y_{1}=1|X)$ or $S_{1}$ curves in Fig. \ref{fig:side:a} are symmetric $w.r.t.$ the y-axis.

\begin{figure}[t]
	\begin{center}
		\centerline{\includegraphics[width=\columnwidth]{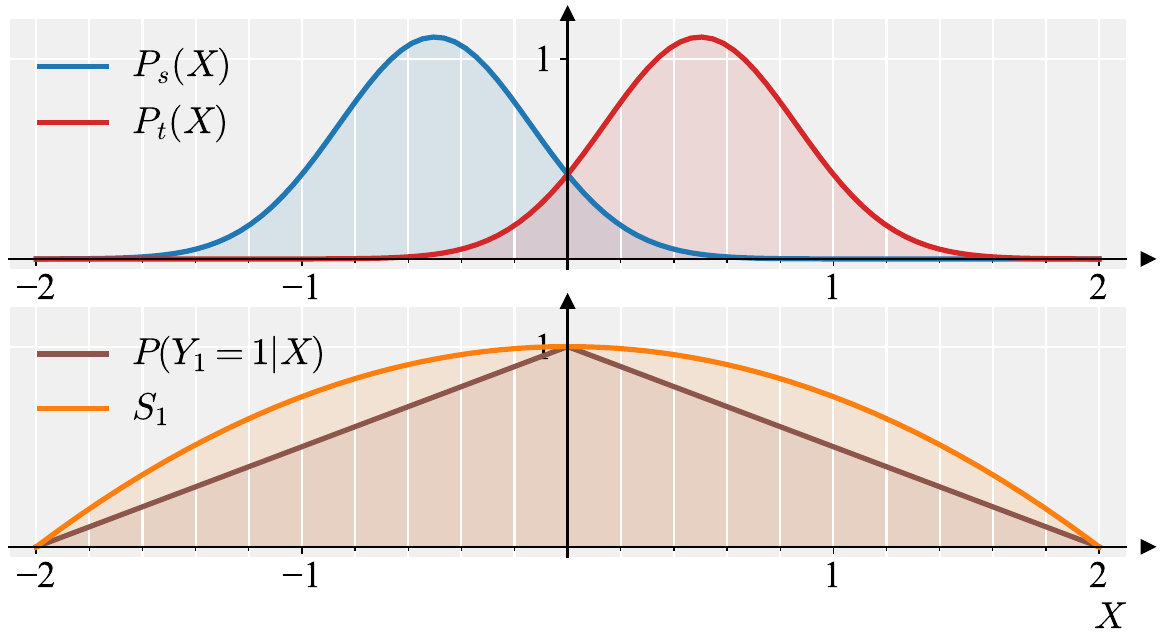}}
		\caption{A binary classification example where covariate shift occurs but calibration error remains unchanged, where $P(Y|X)=\left(P(Y_{1}|X),P(Y_{2}|X)\right)$ and $S=(S_{1}, S_{2})$. $P(Y_{2}|X)=1-P(Y_{1}|X)$ and $S_{2}=1-S_{1}$. ${P_s}(X) = {\left( {\sqrt {2\pi } } \right)^{ - 1}}{e^{ - 0.5{{(X + 0.5)}^2}}}$, ${P_t}(X) = {\left( {\sqrt {2\pi } } \right)^{ - 1}}{e^{ - 0.5{{(X - 0.5)}^2}}}$, $S_{1} = -0.25X^{2}+1$, and $P(Y_{1}=1|X)=-0.5|X|+1$.}
		\label{fig:side:a}
	\end{center}
	\vskip -0.2in
\end{figure}

\subsection{Expectation Consistency Loss}
\label{RCFM}
According to Theorem \ref{Effect}, \textit{Expectation consistency condition} ensures that the target domain can be calibrated as effectively as the source domain. Specifically, in canonical calibration, \textit{Expectation consistency condition} can be rewritten as follows:
\begin{equation} 	
\mathop \mathbb{E}\limits_{{P_t}(S)} \left\lVert {{\mathop  \mathbb{E}\limits_{{P_s}(X|S)}}P(Y|X) - {\mathop \mathbb{E}\limits_{{P_t}(X|S)}}P(Y|X)} \right\rVert = 0,
\end{equation}
where $S = (S_{1}, \cdots, S_{K})=(f_{1}(X),\cdots,f_{K}(X))=f(X)$, $P_{t}(S)$ represents the probability density of the predicted confidence score vector on the target domain. Therefore, \textit{Expectation consistency loss} can be naturally constructed as:
\begin{equation}
  L_{ecl} = \mathop \mathbb{E}\limits_{{P_t}(S)} \left\lVert {{\mathop \mathbb{E}\limits_{{P_s}(X|S)}}P(Y|X) - {\mathop \mathbb{E}\limits_{{P_t}(X|S)}}P(Y|X)} \right\rVert, 
	\label{con}
\end{equation}
To estimate $P(Y|X)$ in practice, we train an additional classification head on the original classifier's backbone, where the label is the one-hot encoded $Y$ and the input data is $X$. This classification head can be trained end-to-end with the original classifier (freeze the backbone when training this classification head). Optionally, this classification head can also be calibrated on the source domain.

\textbf{Extension of Expectation Consistency Loss: }Eq. \ref{con} is \textit{Expectation consistency loss} for canonical calibration. Similarly, \textit{Expectation consistency loss} for class-wise and top-label calibration can be obtained (see Appendix \ref{Loss_Extension}).

\subsection{Empirical Calculation and Differentiability}
$L_{ecl}$ can be empirically estimated using confidence binning and Monte Carlo sampling:
\begin{equation}
	\begin{dcases}
			{ {{\hat L_{ecl}}} = \sum\limits_{j = 1}^B {\frac{{\sharp b_j^{(t)}}}{{\sharp {D_t}}} \left\lVert {{{\hat {\mathbb{E}}}_{s,j}} - {{\hat {\mathbb{E}}}_{t,j}}} \right\rVert} ,}\\
			{{\hat{\mathbb{E}}_{s,j}} = \frac{1}{{\sharp D_s^{(j)}}}\sum\limits_{x \in D_s^{(j)}} {\hat P(Y=y|X = x)} ,}\\
			{{\hat{\mathbb{E}}_{t,j}} = \frac{1}{{\sharp D_t^{(j)}}}\sum\limits_{x \in D_t^{(j)}} {\hat P(Y=y|X = x)} ,}
	\end{dcases}
	\label{estimation}
\end{equation}
where $B$ represents the number of bins, $b_j^{(t)}$ represents the $j$-th bin in the target domain, $\sharp b_{j}^{(t)}$ represents sample size of $b_{j}^{(t)}$, $\sharp D_{t}$ represents sample size of $D_{t}$, ${D_s^{(j)}}$ represents the level set of $b_j^{(t)}$ in the source domain, ${D_t^{(j)}}$ represents the level set of $b_j^{(t)}$ in the target domain, and ${{{\hat P}}(Y = y|X = x)}$ represents the observation of ${{{P}}(Y|X)}$.

\textbf{Differentiability:} The confidence binning operation in Eq. \ref{estimation} is non-differentiable \citep{NEURIPS2021_f8905bd3, bohdal2023metacalibration, NEURIPS2022_33d6e648}, so it cannot be directly used for classifier training. Therefore, a differentiable version is proposed below. Specifically, we replace hard bin membership with a smooth anchor-based assignment over confidence bins. For canonical calibration, the $i$-th confidence vector $S^{(i)}\in\Delta_{K-1}$ is a point in simplex. We introduce $B$ anchor points $a_j\in\Delta_{K-1}$ and define for the soft assignment of the $i$-th confidence vector $S^{(i)}$:
\begin{equation}
\omega_{ij}=\frac{\exp(-\lVert S^{(i)}-a_j\rVert_2^{2}/\tau)}{\sum_{r=1}^{B}\exp(-\lVert S^{(i)}-a_r\rVert_2^{2}/\tau)} ,
\end{equation}
with temperature $\tau>0$. Denoting $p^{(i)}=P(Y|X_{i})$ as the output of the additional classification head (as described in Section \ref{RCFM}), we obtain for each bin $j$ and domain $d\in\{s,t\}$:
\begin{equation}
\hat{\mathbb{E}}_{d,j} = \frac{\sum_{i} \omega^{d}_{ij} \, p^{(i)}}{\sum_{i} \omega^{d}_{ij} + \varepsilon}, \qquad n^{d}_j = \sum_i \omega^{d}_{ij},
\end{equation}
with a small stabilizer $\varepsilon>0$, where $\omega^{d}_{ij}$ represents the soft assignment in domain $d$. Then, the differentiable ECL is:
\begin{equation}
\hat L_{ecl} = \sum_{j=1}^{B} w_j \; \left\lVert\hat{\mathbb{E}}_{s,j}-\hat{\mathbb{E}}_{t,j}\right\rVert, \qquad w_j = \frac{n^{t}_j}{\sum_{r=1}^{B} n^{t}_r}.
\label{Differentiable_ECL}
\end{equation}
\textbf{Extension of Differentiable ECL: }Eq. \ref{Differentiable_ECL} is differentiable \textit{Expectation consistency loss} for canonical calibration. Similarly, differentiable \textit{Expectation consistency loss} for top-label and class-wise calibration can be obtained (see Appendix \ref{appendix: Empirical Calculation and Differentiability}).

\subsection{Sample Complexity Analysis}
\begin{theorem}
\textnormal{\textbf{(Sample Complexity of ECL Estimation)}} Let $\varepsilon\in(0,1)$ and $\delta\in(0,1)$. Consider the empirical ECL in Eq. \ref{estimation} (or Eq. \ref{Differentiable_ECL}) with $B$ bins, bin weights $w_j$ (target-domain proportions or their soft analogs), and per-bin sample counts $n^{t}_{j}$ and $n^{s}_{j}$. There exist absolute constants $C>0$ such that, with probability at least $1-\delta$,
\begin{equation}
 \big|\hat L_{ecl}-L_{ecl} \big| \le C\sqrt{\log\Big(\tfrac{2BK}{\delta}\Big)\sum_{j=1}^{B} w_j\left( \frac{1}{n^{t}_{j}}+\frac{1}{n^{s}_{j}}\right)}.
\end{equation}
Its proof is provided in Appendix \ref{proof_sample_complexity}.
\label{thm:sample_complexity}
\end{theorem}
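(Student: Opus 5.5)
We interpret $L_{ecl}$ as the binned population quantity
\[
L_{ecl}=\sum_{j=1}^{B} w_j\bigl\lVert \mathbb{E}_{s,j}-\mathbb{E}_{t,j}\bigr\rVert ,
\qquad
\mathbb{E}_{d,j}=\mathbb{E}_{X\sim P_d(X\mid S\in b_j)}\, P(Y\mid X),\quad d\in\{s,t\},
\]
and we condition on the bin assignments. Two facts then drive the argument. Each observation $\hat P(Y=y\mid X=x)$ lies in $\Delta_{K-1}$, so every coordinate is bounded in $[0,1]$. Moreover, conditional on the counts $n^t_j,n^s_j$, the samples within bin $j$ are i.i.d. from $P_d(X\mid S\in b_j)$, so $\hat{\mathbb{E}}_{d,j}$ is an average of $n^d_j$ i.i.d. bounded vectors with mean $\mathbb{E}_{d,j}$. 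For the soft version (Eq.~\ref{Differentiable_ECL}) the same structure holds with weighted averages. Replacing $n^d_j$ by the effective sample size $(\sum_i\omega_{ij})^2/\sum_i\omega_{ij}^2\ge n^d_j$ only helps, since $\omega_{ij}\le 1$; the stabilizer $\varepsilon$ contributes a deterministic $O(\varepsilon/n^d_j)$ bias that is absorbed into $C$.

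\textbf{Step 1 (reverse triangle inequality).} Since the norm is $1$-Lipschitz,
\[
|\hat L_{ecl}-L_{ecl}|\le \sum_j w_j\bigl(\lVert\hat{\mathbb{E}}_{s,j}-\mathbb{E}_{s,j}\rVert+\lVert\hat{\mathbb{E}}_{t,j}-\mathbb{E}_{t,j}\rVert\bigr).
\]
This assumes the same weights $w_j$ appear in $\hat L_{ecl}$ and $L_{ecl}$. If $L_{ecl}$ instead uses the population bin masses, an extra term $\sum_j|\hat w_j-w_j|\,\lVert\cdot\rVert$ appears. Since $\lVert\cdot\rVert\le 2$, this term is handled by a multinomial deviation bound of order $\sqrt{B\log(1/\delta)/n_t}$. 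That rate is dominated by the main term, because $\sum_j w_j/n^t_j=B/n_t$ under exact proportions.

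\textbf{Step 2 (coordinatewise Hoeffding and union bound).} For each bin $j$, class $k$ and domain $d$, Hoeffding gives
\[
|\hat{\mathbb{E}}_{d,j,k}-\mathbb{E}_{d,j,k}|\le\sqrt{\log(4BK/\delta)/(2n^d_j)}
\]
with failure probability $\delta/(2BK)$ per event. A union bound over the $2BK$ events explains the $\log(2BK/\delta)$ factor. Each norm deviation is then at most $c_K\sqrt{\log(2BK/\delta)/n^d_j}$, where $c_K$ depends on the chosen norm (e.g.\ $\sqrt K$ for $\ell_2$). To get a truly absolute constant, I would instead use vector Hoeffding or McDiarmid for $\lVert\cdot\rVert_2$ of an average of vectors in the simplex: each summand has norm at most $1$, which gives $\sqrt{\log(1/\delta)/n}$ independent of $K$. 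The $K$ then survives only inside the logarithm via the union bound.

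\textbf{Step 3 (aggregation by Cauchy--Schwarz).} Writing $L=\log(2BK/\delta)$, we have
\[
\sum_j w_j\Bigl(\sqrt{L/n^t_j}+\sqrt{L/n^s_j}\Bigr)\le\sqrt{L}\,\sqrt{\textstyle\sum_j w_j}\,\sqrt{2\textstyle\sum_j w_j\bigl(1/n^t_j+1/n^s_j\bigr)},
\]
using Cauchy--Schwarz together with $(a+b)^2\le 2(a^2+b^2)$. Since $\sum_j w_j=1$, this gives the claimed bound. The sample complexity $O(B/\varepsilon^2)$ follows by plugging in $n^d_j\approx n_d w_j$.

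\textbf{Main obstacle.} The delicate point is justifying conditional independence given the random bin counts, since the bins are data-defined through $f$ and are fixed before the samples are drawn. I will condition on the assignment vector, which leaves the within-bin samples i.i.d. and lets Hoeffding apply conditionally. The bound then holds unconditionally by integrating. Empty bins are excluded, since they have $w_j=0$ or are treated by convention.
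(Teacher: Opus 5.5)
Your proposal follows the same route as the paper's proof. The paper also bounds $|\hat L_{ecl}-L_{ecl}|\le\sum_j w_j(Z_{s,j}+Z_{t,j})$ by the triangle inequality, applies Hoeffding with a union bound over bins and classes, and then asserts that the bounds combine. For the hard-binned estimator (Eq.~\ref{estimation}) your version is correct and tighter than the paper's, in three places:
\begin{itemize}
\item The paper's per-bin bounds read $C_1\sqrt{K\log(2BK/\delta)/n_{s,j}}$, so its ``absolute'' constant silently absorbs a $\sqrt{K}$. Your dimension-free McDiarmid bound for $\lVert\cdot\rVert_2$ genuinely removes it.
\item Your caveat that this depends on the norm is necessary, not cosmetic. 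For $\ell_1$ no $K$-free constant exists, since the $\ell_1$ deviation of an empirical distribution on $K$ atoms is of order $\sqrt{K/n}$.
\item Your Cauchy--Schwarz step supplies the passage from $\sum_j w_j(\sqrt{1/n^t_j}+\sqrt{1/n^s_j})$ to the square-root form, which the paper leaves implicit. Your conditioning on bin assignments and your explicitly binned population target also make precise what the paper's $Z_{s,j},Z_{t,j}$ use tacitly.
\end{itemize}

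The step that does not go through as written is the soft-binned case (Eq.~\ref{Differentiable_ECL}), which the statement explicitly covers. Your effective-sample-size remark is valid only when the weights are independent of the summands. Here $\omega_{ij}=\omega_j(f(X_i))$ and $p^{(i)}=p(X_i)$ come from the same draw. Conditioning on the weights (essentially on the $S^{(i)}$) therefore centres the average at the random quantity $\sum_i\omega_{ij}m_d(S^{(i)})/\sum_i\omega_{ij}$, with $m_d(s)=\mathbb{E}_d[P(Y\mid X)\mid S=s]$, not at $\mathbb{E}_{d,j}$. Moreover, for fixed $\tau>0$ the soft estimator does not in general converge to the hard-bin $\mathbb{E}_{d,j}$ at all. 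Its limit is $\mu_{d,j}=\mathbb{E}_d[\omega_j(S)P(Y\mid X)]/\mathbb{E}_d[\omega_j(S)]$, so $L_{ecl}$ must be redefined with these soft targets.

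The fix is to treat $\hat{\mathbb{E}}_{d,j}$ as a ratio estimator. The centred numerator $\sum_i\omega_{ij}(p^{(i)}-\mu_{d,j})$ is an i.i.d.\ sum of mean-zero bounded terms with variance at most $\mathbb{E}_d[\omega_j(S)]$, since $\omega_j^2\le\omega_j$. Apply Bernstein (in Hilbert-space form to stay $K$-free) together with a Chernoff lower-tail bound on the soft count $n^d_j$. This gives a per-bin error of order $\sqrt{\log(2BK/\delta)/n^d_j}+\log(2BK/\delta)/n^d_j$. The second term is harmless because each per-bin error is bounded by a constant anyway. The paper's proof ignores the soft case entirely, so this gap is shared rather than introduced by you.

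Two smaller points:
\begin{itemize}
\item Your claim that $\hat{\mathbb{E}}_{t,j}$ has mean $\mathbb{E}_{t,j}$ requires the auxiliary head to equal $P(Y\mid X)$ on the unlabeled target. Otherwise there is a bias that no sample size removes. The paper also leaves this assumption tacit.
\item Your closing aside $n^d_j\approx n_d w_j$ is wrong for $d=s$ under covariate shift. In fact $n^s_j\approx n_s\pi^s_j$, with $\pi^s_j$ the source mass of bin $j$, so $\sum_j w_j/n^s_j\approx n_s^{-1}\sum_j\pi^t_j/\pi^s_j$. The $O(B/\varepsilon^2)$ rate therefore additionally needs this binned confidence-space density ratio to be bounded.
\end{itemize}
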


\textbf{Remark on Theorem \ref{thm:sample_complexity}:} Theorem \ref{thm:sample_complexity} implies ECL has a similar sample complexity as histogram binning for ECE, namely $\mathcal{O}(B/\varepsilon^2)$, and the weights $w_j$ explicitly cap the influence of sparse bins. This sample complexity is also similar to that of some point estimation methods (e.g., maximum likelihood estimation with $\mathcal{O}(1/\varepsilon^2)$) and is feasible for most real-world learning tasks.

\subsection{Mini-Batch Trainability}
Most modern deep learning methods are trained using mini-batches, where a small subset of data is processed at each step to compute the loss and update the model via gradient descent. This poses a challenge for confidence calibration loss, since small sample batches often fail to provide sufficiently accurate estimates of calibration error. Similar to the widely used cross-entropy loss, mini-batch trainability requires that the gradient computed on a mini-batch be an unbiased estimate of the gradient over the entire dataset, i.e., ${E_{D_s^{\rm{m}},D_t^{\rm{m}}}}\left[ {{\nabla _\theta }\hat L_{ecl}^{{\rm{m}}}} \right] = {\nabla _\theta }{{\hat L}_{ecl}}$, where $D_s^{\rm{m}}$ and $D_t^{\rm{m}}$ represent mini-batches from the source and target domains, respectively. Therefore, we propose an equivalent formulation of Eq. \ref{Differentiable_ECL} and prove its mini-batch trainability, as established in Theorem \ref{mini_batch_thm}.

\begin{theorem}
	\textnormal{\textbf{(ECL Mini-Batch Trainability)}} Eq. \ref{mini_batch_eq} is asymptotically equivalent to Eq. \ref{Differentiable_ECL}, and it satisfies ${E_{D_s^{\rm{m}},D_t^{\rm{m}}}}\left[ {{\nabla _\theta }\hat L_{ecl}^{{\rm{mini}}}} \right] = {\nabla _\theta }{{\hat L}_{ecl}}$, and its proof is provided in Appendix \ref{proof_mini_batch_thm}:
\begin{equation}
\begin{aligned}
\hat L_{ecl}(\theta ,u_j^s,u_j^t) &= \sum_{j = 1}^B w_j \|u_j^s - u_j^t\| \\
&\quad + \sum_{j = 1}^B \sum_{i \in D_s} \omega_{i,j}^s \, \|u_j^s - p^{(i)}(\theta)\|^2 \\
&\quad + \sum_{j = 1}^B \sum_{i \in D_t} \omega_{i,j}^t \, \|u_j^t - p^{(i)}(\theta)\|^2,
\end{aligned}
\label{mini_batch_eq}
\end{equation}
where $u_j^s$ and $u_j^t$ are learnable parameters used to approximate $\hat{\mathbb{E}}_{s,j}$ and $\hat{\mathbb{E}}_{t,j}$ during the training process, and $p^{(i)}(\theta)$ denotes $P(Y|X_i)$ estimated by an additional classification head trained on the original classifier's backbone.
\label{mini_batch_thm}
\end{theorem}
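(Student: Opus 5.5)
The plan has two parts: (i) show that minimizing Eq.~\ref{mini_batch_eq} over the auxiliary variables $u_j^s,u_j^t$ recovers Eq.~\ref{Differentiable_ECL}, and (ii) show that Eq.~\ref{mini_batch_eq} has an unbiased mini-batch gradient in $\theta$.

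\textbf{Step 1: the quadratic terms.} Fix $\theta$. For each bin $j$ and domain $d\in\{s,t\}$, the term $\sum_{i\in D_d}\omega^d_{ij}\|u_j^d-p^{(i)}(\theta)\|^2$ is a strictly convex quadratic in $u_j^d$. Its unique minimizer is the soft-weighted mean
\begin{equation*}
u_j^{d,*}=\sum_i\omega^d_{ij}p^{(i)}\Big/\sum_i\omega^d_{ij},
\end{equation*}
which equals $\hat{\mathbb{E}}_{d,j}$ up to the stabilizer $\varepsilon$. The coupling term $w_j\|u_j^s-u_j^t\|$ perturbs this minimizer. The quadratic terms carry total mass $n_j^d=\sum_i\omega^d_{ij}$, which grows linearly with the sample size, while $w_j\le 1$ is a normalized proportion. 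The stationarity condition therefore reads
\begin{equation*}
2n_j^d\,(u_j^d-u_j^{d,*})\in -w_j\,\partial\|u_j^s-u_j^t\|,
\end{equation*}
so $\|u_j^d-u_j^{d,*}\|\le w_j/(2n_j^d)=O(1/n)$.

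\textbf{Step 2: asymptotic equivalence.} Substituting these minimizers back, the coupling term equals $\sum_j w_j\|\hat{\mathbb{E}}_{s,j}-\hat{\mathbb{E}}_{t,j}\|+O(1/n)$, which is Eq.~\ref{Differentiable_ECL} plus a vanishing error. The residual quadratic terms do not depend on the discrepancy between domains: they are within-bin variances. They can be handled in one of two ways: subtract them as a $\theta$-dependent constant, or interpret the equivalence as saying that the minimizing $u$ (and hence the ECL value) converges to Eq.~\ref{Differentiable_ECL}. I would state the equivalence in terms of the profiled objective, $\min_u$ of the coupling term at the optimum, matching $\hat L_{ecl}$ as $n\to\infty$. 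Making this profiling argument precise, including the $O(1/n)$ bias and the role of the $\varepsilon$ stabilizer, is the step I expect to be the main obstacle.

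\textbf{Step 3: unbiasedness.} Hold $u$ fixed, since it is a parameter and not a statistic of the batch. The objective in Eq.~\ref{mini_batch_eq} is then a sum over individual samples of per-sample losses $\ell_i(\theta)=\sum_j\omega_{ij}\|u_j-p^{(i)}(\theta)\|^2$, plus a term $w_j\|u_j^s-u_j^t\|$ that does not depend on $\theta$ at fixed weights. If $w_j$ depends on $\theta$ through $n_j^t$, I would treat it as held at its running estimate, or note that it too is a sum over samples divided by a normalizer that can be tracked.

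Under uniform sampling of mini-batches of sizes $m_s,m_t$, the rescaled mini-batch loss $\hat L^{\rm mini}_{ecl}$ replaces each $\sum_{i\in D_d}$ by $(\sharp D_d/m_d)\sum_{i\in D_d^{\rm m}}$. Linearity of expectation gives $E[\nabla_\theta\hat L^{\rm mini}_{ecl}]=\nabla_\theta\hat L_{ecl}(\theta,u)$. Combining this with Step 2, alternating or joint SGD on $(\theta,u)$ uses unbiased gradients of an objective whose profiled value equals Eq.~\ref{Differentiable_ECL} asymptotically. This is the desired statement, interpreted as unbiasedness for the equivalent formulation; that interpretation is the reason Step 2 is necessary.
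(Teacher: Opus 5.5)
Your Steps 1--2 match the paper's equivalence argument: the same stationarity conditions and the same $O(w_j/n^d_j)$ shift of the auxiliaries. You are also right to notice the leftover within-bin terms, which the paper passes over. The gap is in what you do with them.

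Write $F(\theta,u)$ for the right-hand side of Eq.~\ref{mini_batch_eq}. Your Step 3 shows, correctly but trivially, that the rescaled mini-batch gradient is unbiased for $\nabla_\theta F(\theta,u)$ at fixed $u$. The theorem, however, concerns $\nabla_\theta \hat L_{ecl}(\theta)$ for Eq.~\ref{Differentiable_ECL}; the paper's proof ends at Eq.~\ref{grad_full}. Your bridge between the two is the claim that $F$ has a ``profiled value equal to Eq.~\ref{Differentiable_ECL} asymptotically.'' That claim is false. Use $\sum_i\omega^d_{ij}\|u-p^{(i)}\|^2=n^d_j\|u-\hat{\mathbb{E}}_{d,j}\|^2+V^d_j$ with $V^d_j=\sum_i\omega^d_{ij}\|p^{(i)}(\theta)-\hat{\mathbb{E}}_{d,j}\|^2$. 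Then $\min_u F(\theta,u)=\hat L_{ecl}(\theta)+\sum_j(V^s_j+V^t_j)+O(1/n)$. The variances depend on $\theta$ and in general do not vanish. They are sums over all samples, so they grow with $n$, while $\hat L_{ecl}$ stays $O(1)$. By the envelope theorem, what your Step 3 estimates at $u=u^*(\theta)$ is therefore $\nabla_\theta\hat L_{ecl}+\sum_j\nabla_\theta(V^s_j+V^t_j)$ (made precise below). ``Subtracting them as a $\theta$-dependent constant'' is exactly what is not allowed when differentiating in $\theta$. Likewise, the coupling term evaluated at $u^*$ is not the function whose gradient SGD on $F$ follows.

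The missing idea is the paper's: differentiate through the norm via the chain rule, and make its subgradient direction independent of the current batch. Fix the auxiliaries at batch-independent estimates of the full-data bin means. Then the direction $v_j^{\rm full}$ is determined by $\theta$ alone, and the bin-$j$ gradient $w_j\langle v_j^{\rm full},\nabla_\theta\hat{\mathbb{E}}^{\rm m}_{s,j}-\nabla_\theta\hat{\mathbb{E}}^{\rm m}_{t,j}\rangle$ is linear in the per-sample Jacobians. Its expectation is the bin-$j$ term of Eq.~\ref{grad_full}, in contrast to the covariance bias of Eq.~\ref{bias_term}.

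You can recover this inside your own framework. In $\nabla_\theta$ of the quadratic terms, holding $\omega,w$ fixed as the paper does, split $u^d_j-p^{(i)}=(u^d_j-\hat{\mathbb{E}}_{d,j})+(\hat{\mathbb{E}}_{d,j}-p^{(i)})$. Your stationarity conditions are $2n^s_j(u^s_j-\hat{\mathbb{E}}_{s,j})=-w_jg_j$ and $2n^t_j(u^t_j-\hat{\mathbb{E}}_{t,j})=w_jg_j$. They turn the first piece into exactly $w_j\langle g_j,\nabla_\theta\hat{\mathbb{E}}_{s,j}-\nabla_\theta\hat{\mathbb{E}}_{t,j}\rangle$. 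For the Euclidean norm, $g_j$ is the unit direction of $\hat{\mathbb{E}}_{s,j}-\hat{\mathbb{E}}_{t,j}$ whenever that gap exceeds $w_j/(2n^s_j)+w_j/(2n^t_j)$. The second piece is $\nabla_\theta V^d_j$.

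Only the first piece is the ECL gradient, and the paper's $\tilde G^{\rm m}_j$ in effect keeps that piece alone. To reach the stated identity you must exclude the variance piece explicitly, for instance by differentiating only the first piece with $g_j$ held batch-independent. You cannot argue it away as asymptotically negligible, because it is not.
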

\textbf{Remark on Theorem \ref{mini_batch_thm}:} Because nonlinear operators such as norms do not commute with expectations, computing Eq. \ref{Differentiable_ECL} directly on a mini-batch introduces bias into the gradient, as demonstrated in the proof of Theorem \ref{mini_batch_thm}. By introducing auxiliary variables ($u_{j}^{s}$ and $u_{j}^{t}$) for learning the expectation over the full dataset, Eq. \ref{mini_batch_eq} perfectly avoids this problem. Algorithm \ref{alg:mini_batch_updates} provides the pseudocode for the actual calculation of Eq. \ref{mini_batch_eq}. Specifically, $u_{j}^{s}$ and $u_{j}^{t}$ in Eq. \ref{mini_batch_eq} can be solved using alternating proximal updates \citep{Bolte2014}, as detailed in Algorithm \ref{alg:mini_batch_updates}.

\textbf{Extension of ECL Mini-Batch Training: }Algorithm \ref{alg:mini_batch_updates} is ECL mini-batch training for canonical calibration. Similarly, ECL mini-batch training for top-label and class-wise calibration can be obtained (see Appendix \ref{Algorithom_Appendix}).

\begin{algorithm}[t]
	\caption{ECL Mini-Batch Training.}
	\begin{algorithmic}[1]
	\STATE \textbf{Input:}
		\STATE \quad bins $j=1\ldots B$, hyperparameters $\alpha_{\text{ema}}$, $N_{\text{prox}}$, $\lambda$
		\STATE \quad $u_j^s = \mathbf{0} \in \mathbb{R}^K, \forall j$; $u_j^t = \mathbf{0} \in \mathbb{R}^K, \forall j$
	\FOR{each iteration}
		\STATE Sample mini-batches $D_s^m,D_t^m$; 
		\STATE Compute weights $\omega_{ij}^s,\omega_{ij}^t$;
		\STATE $n_{s,j} \leftarrow \sum_{i\in D_s^m}\omega_{ij}^s$, $m_{s,j} \leftarrow \sum_{i\in D_s^m}\omega_{ij}^s p^{(i)}(\theta)$;
		\STATE $n_{t,j} \leftarrow \sum_{i\in D_t^m}\omega_{ij}^t$, $m_{t,j} \leftarrow \sum_{i\in D_t^m}\omega_{ij}^t p^{(i)}(\theta)$;
		\STATE $w_j \leftarrow n_{t,j} / \sum_{r=1}^{B} n_{t,r}$; $L_{\text{ecl}} \leftarrow 0$;
		\FOR{each bin $j$}
			\STATE $u_s,u_t\leftarrow$ cached $u_j^s,u_j^t$
			\FOR{$i=1$ to $N_{\text{prox}}$}
				\STATE $v_s\leftarrow (m_{s,j}/n_{s,j})-u_t$, $\tau_s=\dfrac{w_j}{2 n_{s,j}}$
				\STATE $u_s\leftarrow u_t + \mathrm{shrink}(v_s,\tau_s)$
				\STATE $v_t\leftarrow (m_{t,j}/n_{t,j})-u_s$, $\tau_t=\dfrac{w_j}{2 n_{t,j}}$
				\STATE $u_t\leftarrow u_s + \mathrm{shrink}(v_t,\tau_t)$
			\ENDFOR
			\STATE $\tilde u_j^s,\tilde u_j^t\leftarrow u_s.{\rm detach}(),\;u_t.{\rm detach}()$
			\STATE $u_j^s\leftarrow(1-\alpha_{\text{ema}})u_j^s+\alpha_{\text{ema}}\tilde u_j^s$
			\STATE $u_j^t\leftarrow(1-\alpha_{\text{ema}})u_j^t+\alpha_{\text{ema}}\tilde u_j^t$
			\STATE $L_{\text{ecl}} \mathrel{+}= \sum_{i\in D_s^m}\omega_{ij}^s\|\tilde u_j^s-p^{(i)}(\theta)\|^2$
			\STATE $L_{\text{ecl}} \mathrel{+}= \sum_{i\in D_t^m}\omega_{ij}^t\|\tilde u_j^t-p^{(i)}(\theta)\|^2$
		\ENDFOR
		\STATE Compute the cross-entropy loss $L_{\text{ce}}$
		\STATE Backpropagate $L_{\text{ce}} + \lambda L_{\text{ecl}}$ and update $\theta$.
	\ENDFOR
	\STATE \textbf{Return:} $\theta$
	\end{algorithmic}
	\label{alg:mini_batch_updates}
\end{algorithm}

\section{Results}
The effectiveness of the proposed method is verified from two perspectives: 1) Verify calibration
effectiveness on simulated covariate shift data; 2) Comparison with state-of-the-art calibration
methods on real-world covariate shift datasets.

\subsection{Calibration on Simulated Covariate Shift Data}
\begin{figure*}[t]
	\begin{center}
		\centerline{\includegraphics[width=1.0\textwidth]{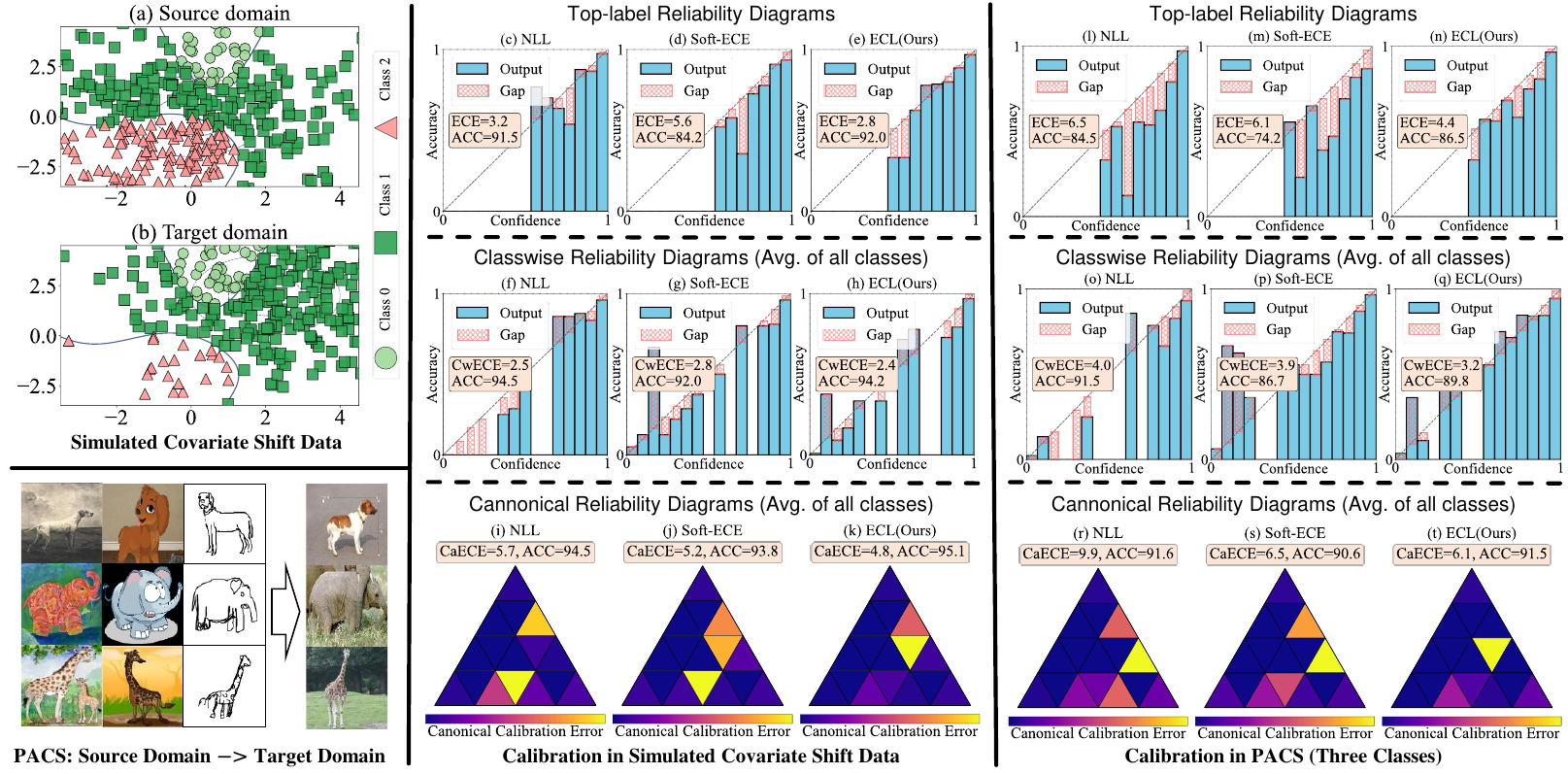}}
		\caption{Calibration effect display. Figures (c) to (k) show the calibration effect on the simulated covariate shift dataset (see Figures (a) and (b)), and Figures (l) to (t) show the calibration effect on the real-world covariate shift dataset PACS (three classes). NLL represents cross-entropy loss, Soft-ECE represents softened differentiable ECE loss, CwECE represents class-wise ECE, and CaECE represents canonical ECE. Results from the three types of reliability diagrams and calibration metrics demonstrate that our method preserves or improves classifier accuracy while substantially reducing calibration errors. Our code is available at \url{https://github.com/NeuroDong/ECL}.}
		\label{fig_before_abstract}
	\end{center}
	\vskip -0.2in
\end{figure*}

\textbf{Experimental Setup:} To observe covariate shift, we model source and target domain covariates as normal and uniform distributions (Figs. \ref{fig_before_abstract}(a-b) and Figs. \ref{fig:uniform}(a-b), respectively). For normal distributions, source domain has mean [0, 0] and covariance [[5, 0], [0, 5]], while target domain has mean [2, 2] and the same covariance. For uniform distributions, source domain is 2D uniform on $[-2.5, 2.5]^2$ and target domain on $[-1.5, 3.5]^2$. Since $P_{s}(Y|X)=P_{t}(Y|X)$, the labeling function is identical in both domains, shown by the blue segmentation curves in Figs. \ref{fig_before_abstract}(a-b) and Figs. \ref{fig:uniform}(a-b). We sample 400 points from each domain. The classifier is a three-layer backpropagation neural network trained with Adam optimizer (learning rate 0.001) for 100 epochs. Reliability diagrams use 15 bins \citep{guo17a,pmlr-v119-zhang20k}. The classification head estimating $P(Y|X)$ (or $P(Y^{*}=\hat Y|X)$ for top-label calibration) is calibrated on the source domain using Soft-ECE loss.

\textbf{Results:} Fig. \ref{fig_before_abstract} and Fig. \ref{fig:uniform} (in Appendix \ref{Simulated_appendix}) show the calibration results on the simulated covariate shift dataset. Fig. \ref{fig_before_abstract} shows the case where the covariate distribution is normally distributed, and Fig. \ref{fig:uniform} shows the case where the covariate distribution is uniformly distributed. The ECL’s results shown in the different reliability diagrams are from different ECL versions about different calibration paradigms. In the reliability diagrams, the outputs of the top-label and class-wise reliability diagrams after ECL calibration are closer to the diagonal, indicating improved calibration performance. In canonical calibration reliability diagrams, high calibration errors usually occur near the midpoint of a side of the large triangle, corresponding to situations where the confidence scores of each component of the predicted vector are not very high. Overall, the number of highlighted small triangle bins after ECL calibration in canonical reliability diagrams will decrease (see Fig. \ref{fig_before_abstract}) or the color will become dark blue (see Fig. \ref{fig:uniform}). From evaluating metrics under the two covariate distribution shifts, ECL can stably reduce calibration error in all three calibration paradigms and improve accuracy in most cases.

\subsection{Calibration on Real-World Covariate Shift Datasets}
\begin{table*}[t]
    \centering
    \footnotesize
    \caption{ECE (\%) for top-label calibration on digit recognition datasets. The reported results represent the mean and standard deviation derived from ten runs.}
    \setlength\tabcolsep{1.5pt}
    \renewcommand{\arraystretch}{1.}
    \begin{tabular}{clccccccccc|cc}
        \toprule
        \multicolumn{2}{c}{\multirow{2}{*}{\textbf{Datasets}}}&\multicolumn{9}{c|}{\textbf{ECE} $\bm{\downarrow}$}& \\
        & &Uncal&Soft-ECE&DECE&KDE&TS&TransCal&DRL&PseudoCal&\cellcolor{color1}ECL (Ours)&\multirow{-2}{*}{\textbf{Oracle} $\bm{\downarrow}$}&\multirow{-2}{*}{\textbf{$\Delta$ACC(\%)}}\\
        \midrule
        \multirow{12}{*}{\rotatebox{90}{\textbf{Digit}}}&\textbf{$\to$ \textit{MNIST}}& & & & & & & & & & & \\
        &LeNet-5&27.3$_{\pm 2.63}$&27.8$_{\pm 2.15}$&26.5$_{\pm 1.88}$&27.9$_{\pm 2.01}$&27.7$_{\pm 1.34}$&26.9$_{\pm 1.16}$&22.3$_{\pm 2.04}$&9.08$_{\pm 0.71}$&\cellcolor{color1}8.52$_{\pm 0.78}$&0.30$_{\pm 0.01}$&-0.92$_{\pm 0.35}$\\
        &ResNet20&16.2$_{\pm 1.51}$&16.5$_{\pm 1.22}$&15.8$_{\pm 1.45}$&16.1$_{\pm 1.10}$&15.3$_{\pm 1.04}$&13.1$_{\pm 0.99}$&10.2$_{\pm 0.72}$&8.22$_{\pm 0.53}$&\cellcolor{color1}7.88$_{\pm 0.45}$&1.54$_{\pm 0.04}$&+1.25$_{\pm 0.42}$\\
        &DenseNet40&23.4$_{\pm 1.79}$&23.6$_{\pm 1.55}$&22.1$_{\pm 1.62}$&22.9$_{\pm 1.48}$&21.6$_{\pm 1.71}$&19.8$_{\pm 0.96}$&14.8$_{\pm 0.95}$&9.72$_{\pm 0.68}$&\cellcolor{color1}9.15$_{\pm 0.61}$&1.40$_{\pm 0.03}$&+0.68$_{\pm 0.20}$\\
        \cmidrule{2-13}
        &\textbf{$\to$ \textit{USPS}}& & & & & & & & & & & \\
        &LeNet-5&22.9$_{\pm 1.50}$&23.1$_{\pm 1.28}$&22.4$_{\pm 1.40}$&22.8$_{\pm 1.35}$&22.7$_{\pm 1.13}$&21.8$_{\pm 1.32}$&15.5$_{\pm 1.16}$&8.92$_{\pm 0.45}$&\cellcolor{color1}8.12$_{\pm 0.42}$&1.54$_{\pm 0.02}$&-0.85$_{\pm 0.25}$\\
        &ResNet20&9.14$_{\pm 0.74}$&9.32$_{\pm 0.65}$&9.05$_{\pm 0.71}$&9.45$_{\pm 0.55}$&9.12$_{\pm 0.84}$&8.36$_{\pm 0.45}$&7.99$_{\pm 0.66}$&5.01$_{\pm 0.30}$&\cellcolor{color1}5.25$_{\pm 0.28}$&2.23$_{\pm 0.06}$&+1.42$_{\pm 0.37}$\\
        &DenseNet40&15.7$_{\pm 0.83}$&15.9$_{\pm 0.76}$&15.3$_{\pm 0.92}$&15.8$_{\pm 1.01}$&13.1$_{\pm 1.02}$&12.1$_{\pm 1.04}$&7.92$_{\pm 0.47}$&5.34$_{\pm 0.34}$&\cellcolor{color1}4.96$_{\pm 0.28}$&2.54$_{\pm 0.05}$&-0.76$_{\pm 0.18}$\\
        \cmidrule{2-13}
        &\textbf{$\to$ \textit{SVHN}}& & & & & & & & & & & \\
        &LeNet-5&61.9$_{\pm 6.16}$&62.2$_{\pm 5.50}$&60.8$_{\pm 5.22}$&62.5$_{\pm 5.80}$&61.3$_{\pm 5.89}$&63.7$_{\pm 4.94}$&23.7$_{\pm 1.93}$&52.4$_{\pm 4.55}$&\cellcolor{color1}21.5$_{\pm 1.51}$&1.03$_{\pm 0.02}$&+1.65$_{\pm 0.65}$\\
        &ResNet20&68.2$_{\pm 6.44}$&67.5$_{\pm 5.92}$&66.9$_{\pm 6.10}$&67.8$_{\pm 6.25}$&68.1$_{\pm 6.13}$&59.4$_{\pm 4.63}$&40.1$_{\pm 3.77}$&48.2$_{\pm 3.95}$&\cellcolor{color1}36.8$_{\pm 2.08}$&0.50$_{\pm 0.02}$&+2.12$_{\pm 0.88}$\\
        &DenseNet40&80.8$_{\pm 6.26}$&81.2$_{\pm 5.88}$&79.5$_{\pm 6.05}$&81.1$_{\pm 6.15}$&77.2$_{\pm 6.98}$&72.9$_{\pm 5.13}$&42.0$_{\pm 3.36}$&64.7$_{\pm 4.72}$&\cellcolor{color1}38.4$_{\pm 3.21}$&0.86$_{\pm 0.03}$&-1.15$_{\pm 0.45}$\\
        \bottomrule 
    \end{tabular}
    \label{comparison_on_Digit}
\end{table*}

\subsubsection{Experimental Setup}
\textbf{Datasets and Networks:} To reflect the effectiveness of calibration methods on the real-world dataset, three different types of covariate shift datasets are selected for experiments: 1) Digit recognition dataset includes three different domains (MNIST \citep{726791}, USPS \citep{291440}, and SVHN \citep{netzer2011reading}); 2) a domain adaptation dataset PACS contains four different domains (Photo, Art Painting, Cartoon, and Sketch) \citep{Li_2017_ICCV}; 3) a large-scale dataset ImageNet-Sketch with 1000 classes contains two domains (ImageNet and Sketch) \citep{NEURIPS2019_3eefceb8}. When constructing covariate shift datasets, one domain of the dataset is used as the target domain, and the other domains are merged into the source domain. The commonly used networks on these datasets are used in the experiments, i.e., LeNet \citep{726791}, ResNet \citep{He_2016_CVPR}, DenseNet \citep{Huang_2017_CVPR}, Wide-ResNet \citep{zagoruyko2016wide} and ViT \citep{dosovitskiy2021an}.

\textbf{Calibration Metrics:} To comprehensively evaluate the calibration performance in three calibration paradigms, we used the following calibration metrics to evaluate the calibration methods: 1) \textbf{ECE}: The classic expected calibration error \citep{guo17a} for top-label calibration; 2) \textbf{CwECE}: Class-wise expected calibration error \citep{NEURIPS2019_8ca01ea9} for class-wise calibration; 3) \textbf{ECE$^{\bm{KDE}}$}: a consistent and differentiable canonical calibration metric for canonical calibration. In addition, we report $\Delta$ACC as the accuracy change relative to the uncalibrated classifier under the same task/architecture, defined as $\Delta$ACC = $\mathrm{ACC}(\text{ECL}) - \mathrm{ACC}(\text{Uncal})$.

\textbf{Baselines:} For a comprehensive comparison, the following methods are compared: 1) \textbf{Uncal}: Training using only cross-entropy loss; 2) \textbf{Soft-ECE} \citep{NEURIPS2021_f8905bd3}: A softened differentiable ECE loss; 3) \textbf{DECE} \citep{bohdal2023metacalibration}: Another softened differentiable ECE loss; 4) \textbf{KDE} \citep{NEURIPS2022_33d6e648}: a differentiable canonical calibration loss; 5) \textbf{TS} \citep{guo17a}: Classic post-hoc calibration method with temperature scaling; 6) \textbf{TransCal} \citep{NEURIPS2020_df12ecd0}: a debiasing calibration method based on importance weighting; 7) \textbf{DRL} \citep{10.24963/ijcai.2023/162}: a calibration method based on distributionally robust learning; 8) \textbf{PseudoCal} \citep{pmlr-v235-hu24i}: a calibration method based on mixup data synthesis; 9) \textbf{Oracle}: Soft-ECE calibration using labels on the target domain.

\subsubsection{Results}

Table \ref{comparison_on_Digit} reports the calibration metric ECE for top-label calibration on the digit recognition benchmarks. Overall, ECL achieves the lowest (or near-lowest) ECE in most transfer tasks and network architectures, demonstrating strong calibration performance compared to state-of-the-art baselines. The advantage of ECL is particularly evident on the SVHN dataset, which involves larger distribution shifts; for instance, on LeNet-5, ECL reduces the ECE from 61.9\% (Uncalibrated) to 21.5\%, substantially improving upon most baselines (e.g., PseudoCal at 52.4\%). Furthermore, the $\Delta$ACC values suggest that ECL often improves calibration while largely preserving the discriminative power of the classifier.

Extended results covering broader benchmarks and calibration paradigms are detailed in Appendices \ref{Results_top_label}, \ref{Results_classwise}, and \ref{Results_canonical}. Appendix \ref{Results_top_label} provides additional top-label calibration results on the PACS and ImageNet-Sketch datasets. Appendices \ref{Results_classwise} and \ref{Results_canonical} present comprehensive evaluations for class-wise and canonical calibration, respectively, across all three dataset suites (Digit, PACS, and ImageNet-Sketch). Overall, these experiments show that ECL is highly competitive and frequently achieves the lowest errors in terms of ECE, CwECE, and ECE$^{KDE}$.

\subsection{Ablation}
\textbf{Mini-Batch Trainability:} 
We empirically verify the role of our proposed mini-batch training strategy by comparing it with a naive baseline, \textbf{Mini-Batch Non-Trainable ECL}, which directly computes Eq. \ref{Differentiable_ECL} on mini-batches. As shown in Table \ref{tab:trainable_vs_nontrainable} (see Appendix), our \textbf{Mini-Batch Trainable ECL} (Algorithm \ref{alg:mini_batch_updates}) is more stable and achieves better calibration in most settings, supporting the effectiveness of the auxiliary variable formulation (Theorem \ref{mini_batch_thm}).

\textbf{Loss Weight $\lambda$:} 
To balance the cross-entropy loss and ECL, we employ an adaptive weighting strategy: $\lambda = \beta^{\gamma}$, where $\beta  = \left( {\sum\nolimits_i {\mathcal{L}_{ce}^{(i)}} } \right)/\left( {\sum\nolimits_i {\mathcal{L}_{ecl}^{(i)}} } \right)$ acts as a balancing factor between the two loss magnitudes. The hyperparameter $\gamma$ controls the sensitivity of this regularization. Our ablation study in Table \ref{tab:ablation_gamma} (see Appendix) suggests that a linear scaling ($\gamma=1.0$) provides a strong trade-off between calibration improvement and accuracy preservation in our tested settings.

\section{Discussion}
\textbf{Why It Works:} 
The essence of ECL is to reorganize the confidence space rather than aligning covariate distributions. For each confidence level $S$, it ensures that source and target samples achieving this confidence share the same expected true posterior $P(Y|X)$, effectively grouping samples with similar true accuracy into the same confidence bins regardless of their input distributions. This level set alignment method directly addresses the essential need for confidence calibration under covariate shift, thereby achieving stable and effective calibration.

\textbf{Potential Impact, Limitations, and Future Work:} 
We rethink confidence calibration under covariate shifts by moving beyond traditional importance weighting. Our findings reveal that strict covariate distribution alignment is unnecessary; instead, a weaker condition—the \textit{Expectation Consistency Condition}—is sufficient for target domain calibration. This insight has the potential to inspire further research and enhance decision-making in safety-critical cross-population applications. However, our method assumes invariant posterior class probabilities ($P(Y|X)$), a common assumption among other methods in this field. Consequently, scenarios involving label shift, where the input-output relationship changes, fall outside the scope of this work. Future work will explore extending our framework to address calibration under both covariate and label shifts.

\section{Conclusion}

This paper rethinks confidence calibration under covariate shifts by moving beyond the traditional importance weighting paradigm. We derive a necessary and sufficient condition for confidence calibration under covariate shifts, termed the \textit{Expectation Consistency Condition}, which reveals that covariate shifts do not necessarily lead to uncalibrated confidence and provides a weaker condition than global covariate distribution alignment. Building upon this theoretical foundation, we propose the \textit{Expectation Consistency Loss} (ECL), an unsupervised domain adaptation loss that can be seamlessly applied to canonical, class-wise, and top-label calibration paradigms. Furthermore, we prove that ECL shares the same sample complexity as histogram binning for ECE estimation and provide a theoretically grounded mini-batch training scheme that enables unbiased gradient computation. Extensive experiments on both simulated and real-world covariate shift datasets demonstrate that ECL achieves competitive calibration errors across all three calibration paradigms while generally preserving classifier accuracy. Our work opens new avenues for confidence calibration research by shifting the focus from global distribution alignment to enforcing local consistency in critical statistics.

\section*{Acknowledgements}
This work was supported by the Science and Technology Innovation Program of Hunan Province (Grant Number: 2024RC1007) and the Central South University Post-Graduate Independent Exploration and Innovation Project (Grant Number: 2025ZZTS0616).

\section*{Impact Statement}
This paper presents work whose goal is to advance the field of Machine Learning. There are many potential societal consequences of our work, none which we feel must be specifically highlighted here.

\bibliography{example_paper}

@inproceedings{
	dosovitskiy2021an,
	title={An Image is Worth 16x16 Words: Transformers for Image Recognition at Scale},
	author={Alexey Dosovitskiy and Lucas Beyer and Alexander Kolesnikov and Dirk Weissenborn and Xiaohua Zhai and Thomas Unterthiner and Mostafa Dehghani and Matthias Minderer and Georg Heigold and Sylvain Gelly and Jakob Uszkoreit and Neil Houlsby},
	booktitle={International Conference on Learning Representations},
	year={2021},
	url={https://openreview.net/forum?id=YicbFdNTTy}
}

@inproceedings{zagoruyko2016wide,
	title={Wide Residual Networks},
	author={Zagoruyko, Sergey and Komodakis, Nikos},
	booktitle={Procedings of the British Machine Vision Conference 2016},
	pages={87--1},
	year={2016},
	organization={British Machine Vision Association}
}

@InProceedings{Huang_2017_CVPR,
	author = {Huang, Gao and Liu, Zhuang and van der Maaten, Laurens and Weinberger, Kilian Q.},
	title = {Densely Connected Convolutional Networks},
	booktitle = {Proceedings of the IEEE Conference on Computer Vision and Pattern Recognition (CVPR)},
	month = {July},
	year = {2017}
}

@ARTICLE{726791,
	author={Lecun, Y. and Bottou, L. and Bengio, Y. and Haffner, P.},
	journal={Proceedings of the IEEE}, 
	title={Gradient-based learning applied to document recognition}, 
	year={1998},
	volume={86},
	number={11},
	pages={2278-2324},
	doi={10.1109/5.726791}}

@InProceedings{He_2016_CVPR,
	author = {He, Kaiming and Zhang, Xiangyu and Ren, Shaoqing and Sun, Jian},
	title = {Deep Residual Learning for Image Recognition},
	booktitle = {Proceedings of the IEEE Conference on Computer Vision and Pattern Recognition (CVPR)},
	month = {June},
	year = {2016}
}

@ARTICLE{291440,
	author={Hull, J.J.},
	journal={IEEE Transactions on Pattern Analysis and Machine Intelligence}, 
	title={A database for handwritten text recognition research}, 
	year={1994},
	volume={16},
	number={5},
	pages={550-554},
	doi={10.1109/34.291440}}

@InProceedings{Li_2017_ICCV,
	author = {Li, Da and Yang, Yongxin and Song, Yi-Zhe and Hospedales, Timothy M.},
	title = {Deeper, Broader and Artier Domain Generalization},
	booktitle = {Proceedings of the IEEE International Conference on Computer Vision (ICCV)},
	month = {Oct},
	year = {2017}
}

@inproceedings{NEURIPS2019_3eefceb8,
	author = {Wang, Haohan and Ge, Songwei and Lipton, Zachary and Xing, Eric P},
	booktitle = {Advances in Neural Information Processing Systems},
	editor = {H. Wallach and H. Larochelle and A. Beygelzimer and F. d\textquotesingle Alch\'{e}-Buc and E. Fox and R. Garnett},
	pages = {},
	publisher = {Curran Associates, Inc.},
	title = {Learning Robust Global Representations by Penalizing Local Predictive Power},
	url = {https://proceedings.neurips.cc/paper_files/paper/2019/file/3eefceb8087e964f89c2d59e8a249915-Paper.pdf},
	volume = {32},
	year = {2019}
}

@inproceedings{NEURIPS2019_f1748d6b,
	author = {M\"{u}ller, Rafael and Kornblith, Simon and Hinton, Geoffrey E},
	booktitle = {Advances in Neural Information Processing Systems},
	editor = {H. Wallach and H. Larochelle and A. Beygelzimer and F. d\textquotesingle Alch\'{e}-Buc and E. Fox and R. Garnett},
	pages = {},
	publisher = {Curran Associates, Inc.},
	title = {When does label smoothing help?},
	url = {https://proceedings.neurips.cc/paper_files/paper/2019/file/f1748d6b0fd9d439f71450117eba2725-Paper.pdf},
	volume = {32},
	year = {2019}
}

@InProceedings{Liu_2023_CVPR,
	author    = {Liu, Bingyuan and Rony, J\'er\^ome and Galdran, Adrian and Dolz, Jose and Ben Ayed, Ismail},
	title     = {Class Adaptive Network Calibration},
	booktitle = {Proceedings of the IEEE/CVF Conference on Computer Vision and Pattern Recognition (CVPR)},
	month     = {June},
	year      = {2023},
	pages     = {16070-16079}
}

@ARTICLE{9324926,
	author={Fernando, K. Ruwani M. and Tsokos, Chris P.},
	journal={IEEE Transactions on Neural Networks and Learning Systems}, 
	title={Dynamically Weighted Balanced Loss: Class Imbalanced Learning and Confidence Calibration of Deep Neural Networks}, 
	year={2022},
	volume={33},
	number={7},
	pages={2940-2951},
	doi={10.1109/TNNLS.2020.3047335}}

@InProceedings{Hebbalaguppe_2022_CVPR,
	author    = {Hebbalaguppe, Ramya and Prakash, Jatin and Madan, Neelabh and Arora, Chetan},
	title     = {A Stitch in Time Saves Nine: A Train-Time Regularizing Loss for Improved Neural Network Calibration},
	booktitle = {Proceedings of the IEEE/CVF Conference on Computer Vision and Pattern Recognition (CVPR)},
	month     = {June},
	year      = {2022},
	pages     = {16081-16090}
}

@inproceedings{
	Grathwohl2020Your,
	title={Your classifier is secretly an energy based model and you should treat it like one},
	author={Will Grathwohl and Kuan-Chieh Wang and Joern-Henrik Jacobsen and David Duvenaud and Mohammad Norouzi and Kevin Swersky},
	booktitle={International Conference on Learning Representations},
	year={2020},
	url={https://openreview.net/forum?id=Hkxzx0NtDB}
}

@InProceedings{Yang_2021_ICCV,
	author    = {Yang, Xiulong and Ji, Shihao},
	title     = {JEM++: Improved Techniques for Training JEM},
	booktitle = {Proceedings of the IEEE/CVF International Conference on Computer Vision (ICCV)},
	month     = {October},
	year      = {2021},
	pages     = {6494-6503}
}

@inproceedings{NEURIPS2020_9bc99c59,
	author = {Rahimi, Amir and Shaban, Amirreza and Cheng, Ching-An and Hartley, Richard and Boots, Byron},
	booktitle = {Advances in Neural Information Processing Systems},
	editor = {H. Larochelle and M. Ranzato and R. Hadsell and M.F. Balcan and H. Lin},
	pages = {13456--13467},
	publisher = {Curran Associates, Inc.},
	title = {Intra Order-preserving Functions for Calibration of Multi-Class Neural Networks},
	url = {https://proceedings.neurips.cc/paper_files/paper/2020/file/9bc99c590be3511b8d53741684ef574c-Paper.pdf},
	volume = {33},
	year = {2020}
}

@inproceedings{
	gupta2021calibration,
	title={Calibration of Neural Networks using Splines},
	author={Kartik Gupta and Amir Rahimi and Thalaiyasingam Ajanthan and Thomas Mensink and Cristian Sminchisescu and Richard Hartley},
	booktitle={International Conference on Learning Representations},
	year={2021},
	url={https://openreview.net/forum?id=eQe8DEWNN2W}
}

@article{JIANG2023105849,
	title = {A novel intelligent monitoring method for the closing time of the taphole of blast furnace based on two-stage classification},
	journal = {Engineering Applications of Artificial Intelligence},
	volume = {120},
	pages = {105849},
	year = {2023},
	issn = {0952-1976},
	doi = {https://doi.org/10.1016/j.engappai.2023.105849},
	url = {https://www.sciencedirect.com/science/article/pii/S0952197623000337},
	author = {Zhaohui Jiang and Jinzong Dong and Dong Pan and Tianyu Wang and Weihua Gui}
}

@Article{LeCun2015,
	author={LeCun, Yann
	and Bengio, Yoshua
	and Hinton, Geoffrey},
	title={Deep learning},
	journal={Nature},
	year={2015},
	month={May},
	day={01},
	volume={521},
	number={7553},
	pages={436-444},
	issn={1476-4687},
	doi={10.1038/nature14539},
	url={https://doi.org/10.1038/nature14539}
}

@inproceedings{NEURIPS2023_e271e30d,
	author = {Munir, Muhammad Akhtar and Khan, Salman H and Khan, Muhammad Haris and Ali, Mohsen and Shahbaz Khan, Fahad},
	booktitle = {Advances in Neural Information Processing Systems},
	editor = {A. Oh and T. Naumann and A. Globerson and K. Saenko and M. Hardt and S. Levine},
	pages = {71619--71631},
	publisher = {Curran Associates, Inc.},
	title = {Cal-DETR: Calibrated Detection Transformer},
	url = {https://proceedings.neurips.cc/paper_files/paper/2023/file/e271e30de7a2e462ca1f85cefa816380-Paper-Conference.pdf},
	volume = {36},
	year = {2023}
}

@article{jiang2012calibrating,
	author = {Jiang, Xiaoqian and Osl, Melanie and Kim, Jihoon and Ohno-Machado, Lucila},
	title = {Calibrating predictive model estimates to support personalized medicine},
	journal = {Journal of the American Medical Informatics Association},
	volume = {19},
	number = {2},
	pages = {263-274},
	year = {2011},
	month = {10},
	issn = {1067-5027},
	doi = {10.1136/amiajnl-2011-000291},
	url = {https://doi.org/10.1136/amiajnl-2011-000291},
	eprint = {https://academic.oup.com/jamia/article-pdf/19/2/263/17374049/19-2-263.pdf},
}

@article{HAN2024110385,
	title = {BALQUE: Batch active learning by querying unstable examples with calibrated confidence},
	journal = {Pattern Recognition},
	volume = {151},
	pages = {110385},
	year = {2024},
	issn = {0031-3203},
	doi = {https://doi.org/10.1016/j.patcog.2024.110385},
	url = {https://www.sciencedirect.com/science/article/pii/S0031320324001365},
	author = {Yincheng Han and Dajiang Liu and Jiaxing Shang and Linjiang Zheng and Jiang Zhong and Weiwei Cao and Hong Sun and Wu Xie}
}

@inproceedings{li2023distilling,
	title = "Distilling Calibrated Knowledge for Stance Detection",
	author = "Li, Yingjie  and
	Caragea, Cornelia",
	editor = "Rogers, Anna  and
	Boyd-Graber, Jordan  and
	Okazaki, Naoaki",
	booktitle = "Findings of the Association for Computational Linguistics: ACL 2023",
	month = jul,
	year = "2023",
	address = "Toronto, Canada",
	publisher = "Association for Computational Linguistics",
	url = "https://aclanthology.org/2023.findings-acl.393/",
	doi = "10.18653/v1/2023.findings-acl.393",
	pages = "6316--6329"
}

@Article{Gawlikowski2023,
	author={Gawlikowski, Jakob
	and Tassi, Cedrique Rovile Njieutcheu
	and Ali, Mohsin
	and Lee, Jongseok
	and Humt, Matthias
	and Feng, Jianxiang
	and Kruspe, Anna
	and Triebel, Rudolph
	and Jung, Peter
	and Roscher, Ribana
	and Shahzad, Muhammad
	and Yang, Wen
	and Bamler, Richard
	and Zhu, Xiao Xiang},
	title={A survey of uncertainty in deep neural networks},
	journal={Artificial Intelligence Review},
	year={2023},
	month={Oct},
	day={01},
	volume={56},
	number={1},
	pages={1513-1589},
	issn={1573-7462},
	doi={10.1007/s10462-023-10562-9},
	url={https://doi.org/10.1007/s10462-023-10562-9}
}

@article{dong2024combining, 
	title={Combining Priors with Experience: Confidence Calibration Based on Binomial Process Modeling}, 
	volume={39}, 
	url={https://ojs.aaai.org/index.php/AAAI/article/view/33792}, 
	DOI={10.1609/aaai.v39i15.33792}, 
	abstractNote={Confidence calibration of classification models is a technique to estimate the true posterior probability of the predicted class, which is critical for ensuring reliable decision-making in practical applications. Existing confidence calibration methods mostly use statistical techniques to estimate the calibration curve from data or fit a user-defined calibration function, but often overlook fully mining and utilizing the prior distribution behind the calibration curve. However, a well-informed prior distribution can provide valuable insights beyond the empirical data under the limited data or low-density regions of confidence scores. To fill this gap, this paper proposes a new method that integrates the prior distribution behind the calibration curve with empirical data to estimate a continuous calibration curve, which is realized by modeling the sampling process of calibration data as a binomial process and maximizing the likelihood function of the binomial process. We prove that the calibration curve estimating method is Lipschitz continuous with respect to data distribution and requires smaller sample sizes than histogram binning. Also, a new calibration metric has been designed, leveraging the estimated calibration curve to estimate the true calibration error, and it has been proven to be a consistent calibration measure. Furthermore, realistic calibration datasets can be generated by the binomial process modeling from a preset true calibration curve and confidence score distribution, which can serve as a benchmark to measure and compare the discrepancy between existing calibration metrics and the true calibration error. The effectiveness of our calibration method and metric are verified in real-world and simulated data. We believe our exploration of integrating prior distributions with empirical data will guide the development of better-calibrated models, contributing to trustworthy AI.}, 
	number={15}, 
	journal={Proceedings of the AAAI Conference on Artificial Intelligence}, 
	author={Dong, Jinzong and Jiang, Zhaohui and Pan, Dong and Yu, Haoyang}, 
	year={2025}, 
	month={Apr.}, 
	pages={16317-16326} }

@ARTICLE{Dong_2025,
	author={Dong, Jinzong and Jiang, Zhaohui and Pan, Dong and Chen, Zhiwen and Guan, Qingyi and Zhang, Hongbin and Gui, Gui and Gui, Weihua},
	journal={IEEE Transactions on Neural Networks and Learning Systems}, 
	title={A Survey on Confidence Calibration of Deep Learning-Based Classification Models Under Class Imbalance Data}, 
	year={2025},
	volume={36},
	number={9},
	pages={15664-15684},
	doi={10.1109/TNNLS.2025.3565159}}

@InProceedings{guo17a,
	title = 	 {On Calibration of Modern Neural Networks},
	author =       {Chuan Guo and Geoff Pleiss and Yu Sun and Kilian Q. Weinberger},
	booktitle = 	 {Proceedings of the 34th International Conference on Machine Learning},
	pages = 	 {1321--1330},
	year = 	 {2017},
	editor = 	 {Precup, Doina and Teh, Yee Whye},
	volume = 	 {70},
	series = 	 {Proceedings of Machine Learning Research},
	month = 	 {06--11 Aug},
	publisher =    {PMLR},
	url = 	 {https://proceedings.mlr.press/v70/guo17a.html}
}

@InProceedings{pmlr-v119-zhang20k,
	title = 	 {Mix-n-Match : Ensemble and Compositional Methods for Uncertainty Calibration in Deep Learning},
	author =       {Zhang, Jize and Kailkhura, Bhavya and Han, T. Yong-Jin},
	booktitle = 	 {Proceedings of the 37th International Conference on Machine Learning},
	pages = 	 {11117--11128},
	year = 	 {2020},
	editor = 	 {III, Hal Daumé and Singh, Aarti},
	volume = 	 {119},
	series = 	 {Proceedings of Machine Learning Research},
	month = 	 {13--18 Jul},
	publisher =    {PMLR},
	url = 	 {https://proceedings.mlr.press/v119/zhang20k.html}
}

@inproceedings{NEURIPS2019_8ca01ea9,
	author = {Kull, Meelis and Perello Nieto, Miquel and K\"{a}ngsepp, Markus and Silva Filho, Telmo and Song, Hao and Flach, Peter},
	booktitle = {Advances in Neural Information Processing Systems},
	editor = {H. Wallach and H. Larochelle and A. Beygelzimer and F. d\textquotesingle Alch\'{e}-Buc and E. Fox and R. Garnett},
	pages = {},
	publisher = {Curran Associates, Inc.},
	title = {Beyond temperature scaling: Obtaining well-calibrated multi-class probabilities with Dirichlet calibration},
	url = {https://proceedings.neurips.cc/paper_files/paper/2019/file/8ca01ea920679a0fe3728441494041b9-Paper.pdf},
	volume = {32},
	year = {2019}
}

@ARTICLE{10356834,
	author={Zhu, Fei and Zhang, Xu-Yao and Cheng, Zhen and Liu, Cheng-Lin},
	journal={IEEE Transactions on Pattern Analysis and Machine Intelligence}, 
	title={Revisiting Confidence Estimation: Towards Reliable Failure Prediction}, 
	year={2024},
	volume={46},
	number={5},
	pages={3370-3387},
	doi={10.1109/TPAMI.2023.3342285}}

@article{10.5555/1577069.1755858,
	author = {Bickel, Steffen and Br\"{u}ckner, Michael and Scheffer, Tobias},
	title = {Discriminative Learning Under Covariate Shift},
	year = {2009},
	issue_date = {12/1/2009},
	publisher = {JMLR.org},
	volume = {10},
	number={9},
	issn = {1532-4435},
	journal = {Journal of Machine Learning Research},
	month = dec,
	pages = {2137–2155},
	numpages = {19}
}

@article{
	kimura2024a,
	title={A Short Survey on Importance Weighting for Machine Learning},
	author={Masanari Kimura and Hideitsu Hino},
	journal={Transactions on Machine Learning Research},
	issn={2835-8856},
	year={2024},
	url={https://openreview.net/forum?id=IhXM3g2gxg},
	note={Survey Certification}
}

@InProceedings{pmlr-v235-hu24i,
	title = 	 {Pseudo-Calibration: Improving Predictive Uncertainty Estimation in Unsupervised Domain Adaptation},
	author =       {Hu, Dapeng and Liang, Jian and Wang, Xinchao and Foo, Chuan-Sheng},
	booktitle = 	 {Proceedings of the 41st International Conference on Machine Learning},
	pages = 	 {19304--19326},
	year = 	 {2024},
	editor = 	 {Salakhutdinov, Ruslan and Kolter, Zico and Heller, Katherine and Weller, Adrian and Oliver, Nuria and Scarlett, Jonathan and Berkenkamp, Felix},
	volume = 	 {235},
	series = 	 {Proceedings of Machine Learning Research},
	month = 	 {21--27 Jul},
	publisher =    {PMLR},
	url = 	 {https://proceedings.mlr.press/v235/hu24i.html}
}

@InProceedings{pmlr-v108-park20b,
	title = 	 {Calibrated Prediction with Covariate Shift via Unsupervised Domain Adaptation},
	author =       {Park, Sangdon and Bastani, Osbert and Weimer, James and Lee, Insup},
	booktitle = 	 {Proceedings of the Twenty Third International Conference on Artificial Intelligence and Statistics},
	pages = 	 {3219--3229},
	year = 	 {2020},
	editor = 	 {Chiappa, Silvia and Calandra, Roberto},
	volume = 	 {108},
	series = 	 {Proceedings of Machine Learning Research},
	month = 	 {26--28 Aug},
	publisher =    {PMLR},
	url = 	 {https://proceedings.mlr.press/v108/park20b.html}
}

@article{DBLP:journals/corr/abs-2006-16405,
	publtype={informal},
	author={Anusri Pampari and Stefano Ermon},
	title={Unsupervised Calibration under Covariate Shift},
	year={2020},
	cdate={1577836800000},
	journal={CoRR},
	volume={abs/2006.16405},
	url={https://arxiv.org/abs/2006.16405}
}

@inproceedings{NEURIPS2020_df12ecd0,
	author = {Wang, Ximei and Long, Mingsheng and Wang, Jianmin and Jordan, Michael},
	booktitle = {Advances in Neural Information Processing Systems},
	editor = {H. Larochelle and M. Ranzato and R. Hadsell and M.F. Balcan and H. Lin},
	pages = {19212--19223},
	publisher = {Curran Associates, Inc.},
	title = {Transferable Calibration with Lower Bias and Variance in Domain Adaptation},
	url = {https://proceedings.neurips.cc/paper_files/paper/2020/file/df12ecd077efc8c23881028604dbb8cc-Paper.pdf},
	volume = {33},
	year = {2020}
}

@inproceedings{10.24963/ijcai.2023/162,
	author = {Wang, Haoxuan and Yu, Zhiding and Yue, Yisong and Anandkumar, Animashree and Liu, Anqi and Yan, Junchi},
	title = {Learning calibrated uncertainties for domain shift: a distributionally robust learning approach},
	year = {2023},
	isbn = {978-1-956792-03-4},
	url = {https://doi.org/10.24963/ijcai.2023/162},
	doi = {10.24963/ijcai.2023/162},
	booktitle = {Proceedings of the Thirty-Second International Joint Conference on Artificial Intelligence},
	articleno = {162},
	numpages = {10},
	location = {Macao, P.R.China},
	series = {IJCAI '23}
}

@inproceedings{NIPS2010_59c33016,
	author = {Cortes, Corinna and Mansour, Yishay and Mohri, Mehryar},
	booktitle = {Advances in Neural Information Processing Systems},
	editor = {J. Lafferty and C. Williams and J. Shawe-Taylor and R. Zemel and A. Culotta},
	pages = {},
	publisher = {Curran Associates, Inc.},
	title = {Learning Bounds for Importance Weighting},
	url = {https://proceedings.neurips.cc/paper_files/paper/2010/file/59c33016884a62116be975a9bb8257e3-Paper.pdf},
	volume = {23},
	year = {2010}
}

@inproceedings{NEURIPS2022_33d6e648,
	author = {Popordanoska, Teodora and Sayer, Raphael and Blaschko, Matthew},
	booktitle = {Advances in Neural Information Processing Systems},
	editor = {S. Koyejo and S. Mohamed and A. Agarwal and D. Belgrave and K. Cho and A. Oh},
	pages = {7933--7946},
	publisher = {Curran Associates, Inc.},
	title = {A Consistent and Differentiable Lp Canonical Calibration Error Estimator},
	url = {https://proceedings.neurips.cc/paper_files/paper/2022/file/33d6e648ee4fb24acec3a4bbcd4f001e-Paper-Conference.pdf},
	volume = {35},
	year = {2022}
}

@inproceedings{NEURIPS2021_f8905bd3,
	author = {Karandikar, Archit and Cain, Nicholas and Tran, Dustin and Lakshminarayanan, Balaji and Shlens, Jonathon and Mozer, Michael C and Roelofs, Becca},
	booktitle = {Advances in Neural Information Processing Systems},
	editor = {M. Ranzato and A. Beygelzimer and Y. Dauphin and P.S. Liang and J. Wortman Vaughan},
	pages = {29768--29779},
	publisher = {Curran Associates, Inc.},
	title = {Soft Calibration Objectives for Neural Networks},
	url = {https://proceedings.neurips.cc/paper_files/paper/2021/file/f8905bd3df64ace64a68e154ba72f24c-Paper.pdf},
	volume = {34},
	year = {2021}
}

@article{
	bohdal2023metacalibration,
	title={Meta-Calibration: Learning of Model Calibration Using Differentiable Expected Calibration Error},
	author={Ondrej Bohdal and Yongxin Yang and Timothy Hospedales},
	journal={Transactions on Machine Learning Research},
	issn={2835-8856},
	year={2023},
	url={https://openreview.net/forum?id=R2hUure38l},
	note={}
}

@Article{Bolte2014,
	author={Bolte, J{\'e}r{\^o}me
	and Sabach, Shoham
	and Teboulle, Marc},
	title={Proximal alternating linearized minimization for nonconvex and nonsmooth problems},
	journal={Mathematical Programming},
	year={2014},
	month={Aug},
	day={01},
	volume={146},
	number={1},
	pages={459-494},
	issn={1436-4646},
	doi={10.1007/s10107-013-0701-9},
	url={https://doi.org/10.1007/s10107-013-0701-9}
}

@inproceedings{netzer2011reading,
  title={Reading digits in natural images with unsupervised feature learning},
  author={Netzer, Yuval and Wang, Tao and Coates, Adam and Bissacco, Alessandro and Wu, Baolin and Ng, Andrew Y and others},
  booktitle={NIPS workshop on deep learning and unsupervised feature learning},
  volume={2011},
  number={5},
  pages={7},
  year={2011},
  organization={Granada}
}
\bibliographystyle{icml2026}

\newpage
\appendix
\onecolumn
\section*{Appendix}
\section{Top-label Calibration and Class-wise Calibration}
\label{calibration_paradigm}
\begin{definition}
	\textnormal{\textbf{(Top-label Calibration)}} A classifier is perfectly top-label calibrated if the following equation holds:
	\begin{equation}
		P(Y^{*} = \hat Y|\hat S = \hat s) = \hat s, 
	\end{equation}
	where $Y^{*}={{\mathop{\rm argmax}\nolimits} _k}{\{ {Y_k}\} _{1 \le k \le K}}$ is the true class scalar, $\hat Y = {{\mathop{\rm argmax}\nolimits} _k}{\{ {S_k}\} _{1 \le k \le K}}$ is the predicted class, $\hat S = \max {\{ {S_k}\} _{1 \le k \le K}}$ is the confidence score of the predicted class, and $\hat s$ is the observed value on $\hat S$.
	\label{top_label_calibrated}
\end{definition}
\begin{definition}
	\textnormal{\textbf{(Class-wise Calibration)}} A classifier is perfectly class-wise calibrated if the following equation holds:
	\begin{equation}
		P(Y_{k} = 1|S_{k} = s_{k}) = s_{k}, \forall 1 \le k \le K,
		\label{classwise_calibrated_eq}
	\end{equation}
	where $Y_{k}$ is the $k$-th component of the one-hot label $Y$, and $S_{k}$ is the $k$-th component of confidence score vector $S$, and $s_{k}$ is the observed value on $S_{k}$.
	\label{classwise_label_calibrated}
\end{definition}

\section{Proof of Theorem \ref{Effect}}
\label{proof_of_ECC}
\begin{proof}
First, according to Total Probability Theorem, the following holds:
\begin{equation}
\begin{aligned}
P(Y_{k}=1 \mid S) 
&= \int_X P(Y_{k}=1,X \mid S)\,dX \\
&= \int_X P(Y_{k}=1 \mid X,S)\,P(X \mid S)\,dX \\
&= \int_X P(Y_{k}=1 \mid X)\,P(X \mid S)\,dX \\
&= \mathbb{E}_{X\sim P(X\mid S)}[P(Y_{k}=1 \mid X)] .
\end{aligned}
\label{eq_f}
\end{equation}
where the second-to-last equality is because $X$ contains all the information that $S$ can provide. According to the definition of covariate shift, $P_{s}(Y_{k}=1|X)=P_{t}(Y_{k}=1|X)$. Therefore, if $P_{s}(Y_{k}=1|S) = P_{t}(Y_{k}=1|S)$, then:
\begin{equation}
{\mathbb{E}_{X\sim{P_s}(X|S)}}[P(Y_{k}=1|X)] = {\mathbb{E}_{X\sim{P_t}(X|S)}}[P(Y_{k}=1|X)]. 
\end{equation}
where ${P(Y_{k}=1|X)} = {P_{s}(Y_{k}=1|X)} = {P_{t}(Y_{k}=1|X)}$. Conversely, if ${\mathbb{E}_{X \sim {P_s}(X|S)}}[P(Y_{k}=1|X)] = {\mathbb{E}_{X \sim {P_t}(X|S)}}[P(Y_{k}=1|X)]$, it also holds that $P_{s}(Y_{k}=1|S) = P_{t}(Y_{k}=1|S)$.
\end{proof}

\section{Calibration Comparison Between Source and Target Domains}
\label{Source_and_Target}
Typically, a classifier's calibration error in the source domain is significantly lower than that in the target domain because there is no distribution shift that leads to insufficient generalization. For the sake of rigor, we still verified this natural assumption through experiments. Table \ref{Source_and_Target_table} presents the experimental results. We use soft-ECE as the calibration method to calibrate the models in the source domain. All three calibration metrics for different calibration paradigms show that the calibration error in the source domain is significantly lower than that in the target domain. Therefore, even just making the calibration error in the target domain as good as that in the source domain would be a significant improvement.

\begin{table*}[h]
	\centering
	\caption{Comparison of calibration errors between the source and target domains. The subscript $s$ denotes the source domain, while the subscript $t$ denotes the target domain. ResNet-20 is used for the Digit dataset, ResNet-50 for the PACS dataset, and ViT-L for the ImageNet-Sketch dataset.}
	\setlength\tabcolsep{7.0pt}
	\renewcommand{\arraystretch}{1}
	\begin{tabular}{l|cc|cc|cc}
		\toprule
		\textbf{Dataset} & ECE$_{s}$ & ECE$_{t}$ & CwECE$_{s}$ & CwECE$_{t}$& ECE$^{KDE}_{s}$ & ECE$^{KDE}_{t}$ \\
		\midrule
		Digit (USPS + SVHN $\to$ MNIST)&1.54$_{\pm 0.04}$ & 16.2$_{\pm 1.51}$ & 0.39$_{\pm 0.01}$ & 3.14$_{\pm 0.31}$ & 0.39$_{\pm 0.02}$ & 2.97$_{\pm 0.23}$ \\
		\midrule
		PACS (Art + Cartoon + Sketch $\to$ Photo)&3.84$_{\pm 0.23}$ & 22.3$_{\pm 2.16}$ & 0.58$_{\pm 0.01}$ & 7.87$_{\pm 0.31}$ & 0.42$_{\pm 0.04}$ & 7.58$_{\pm 0.37}$ \\
		\midrule
		ImageNet-Sketch (ImageNet $\to$ Sketch)&1.47$_{\pm 0.11}$ & 55.8$_{\pm 4.34}$ & 0.93$_{\pm 0.09}$ & 12.7$_{\pm 0.87}$ & 0.86$_{\pm 0.06}$ & 12.3$_{\pm 0.73}$ \\
		\bottomrule
	\end{tabular}
	\label{Source_and_Target_table}
\end{table*}

\section{Extension of Theorem \ref{Effect}}
\label{Extension}
\begin{theorem}
	\textnormal{\textbf{(Expectation Consistency Condition for Top-label Calibration)}} $P_{s}(Y^{*}=\hat Y|\hat S) = P_{t}(Y^{*}=\hat Y|\hat S)$ if and only if: ${\mathbb{E}_{X \sim {P_s}(X|\hat S)}}[P(Y^{*} = \hat Y|X)] = {\mathbb{E}_{X \sim {P_t}(X|\hat S)}}[P(Y^{*} = \hat Y|X)]$, where ${P(Y^{*} = \hat Y|X)} = {P_{s}(Y^{*} = \hat Y|X)} = {P_{t}(Y^{*} = \hat Y|X)}$.
	\label{Effect_top}
\end{theorem}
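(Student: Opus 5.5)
We mirror the proof of Theorem \ref{Effect} given in Appendix \ref{proof_of_ECC}. We replace the event $\{Y_k=1\}$ by the event $\{Y^{*}=\hat Y\}$ and the conditioning variable $S$ by $\hat S$. The first step is to write $P(Y^{*}=\hat Y\mid \hat S)$, in either domain, as an integral over $X$ via the law of total probability:
\begin{equation*}
P(Y^{*}=\hat Y\mid \hat S)=\int_X P(Y^{*}=\hat Y\mid X,\hat S)\,P(X\mid \hat S)\,dX .
\end{equation*}

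The key step is to justify $P(Y^{*}=\hat Y\mid X,\hat S)=P(Y^{*}=\hat Y\mid X)$. The point to stress is that both $\hat S=\max_k f_k(X)$ and $\hat Y=\mathrm{argmax}_k f_k(X)$ are deterministic functions of $X$, for a fixed classifier $f$ and a fixed tie-breaking rule. So conditioning on $X$ already determines $\hat S$, and adding $\hat S$ brings no new information. Moreover, given $X=x$ the event $\{Y^{*}=\hat Y\}$ equals $\{Y_{\hat y(x)}=1\}$ with $\hat y(x)$ fixed. Hence $P(Y^{*}=\hat Y\mid X=x)=P(Y_{\hat y(x)}=1\mid X=x)$. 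This yields
\begin{equation*}
P(Y^{*}=\hat Y\mid \hat S)=\mathbb{E}_{X\sim P(X\mid \hat S)}\big[P(Y^{*}=\hat Y\mid X)\big].
\end{equation*}

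Next we invoke the covariate shift assumption of Definition \ref{Covariate_shift}, namely $P_s(Y\mid X)=P_t(Y\mid X)$. Because $\hat y(x)$ is the same map in both domains (same $f$), this gives $P_s(Y^{*}=\hat Y\mid X)=P_t(Y^{*}=\hat Y\mid X)$. This is the common integrand $P(Y^{*}=\hat Y\mid X)$ appearing in the statement. Applying the identity above with $P=P_s$ and with $P=P_t$ shows that the two sides of the claimed equivalence are literally the same quantities. The left side $P_s(Y^{*}=\hat Y\mid\hat S)$ equals the source expectation, and $P_t(Y^{*}=\hat Y\mid\hat S)$ equals the target expectation. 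Both directions of the ``if and only if'' therefore follow at once.

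The only delicate point is the conditional-independence step and its measure-theoretic meaning. $\hat S$ is a lower-dimensional function of $X$, so $P(X\mid\hat S)$ is a distribution concentrated on a level set and need not have a density on $\mathbb{R}^d$. I would handle this by reading all conditional expressions as regular conditional distributions or conditional expectations, defined $P_s$- or $P_t$-a.s. in $\hat s$. I would then use the tower property: $\mathbb{E}[\mathbf{1}\{Y^{*}=\hat Y\}\mid\hat S]=\mathbb{E}\big[\mathbb{E}[\mathbf{1}\{Y^{*}=\hat Y\}\mid X]\mid\hat S\big]$, which holds because $\sigma(\hat S)\subseteq\sigma(X)$. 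The equivalence is thus understood almost surely with respect to the relevant marginals of $\hat S$.
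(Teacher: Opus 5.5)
Your proposal is correct and follows essentially the same route as the paper. Both arguments apply the law of total probability conditioned on $\hat S$, use that $\hat S$ and $\hat Y$ are deterministic functions of $X$, transfer $P(Y^{*}=\hat Y\mid X)$ across domains via covariate shift and the shared classifier, and then read off both directions at once. The paper makes the transfer through the factorization $P(Y^{*},\hat Y\mid X)=P(Y^{*}\mid \hat Y,X)P(\hat Y\mid X)$, while you use $\{Y^{*}=\hat Y\}=\{Y_{\hat y(x)}=1\}$ given $X=x$; this is the same point, and your tower-property formulation with $\sigma(\hat S)\subseteq\sigma(X)$ usefully tightens the paper's informal ``$X$ contains all the information that $\hat S$ can provide'' step.
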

\begin{proof}
	First, according to Total Probability Theorem, the following holds:
\begin{equation}
	\begin{split}
		& {P(Y^{*} = \hat Y|\hat S) = \int_X {P(Y^{*} = \hat Y,X|\hat S)dX} } \\ 
		& { = \int_X {P(Y^{*} = \hat Y|X,\hat S)P(X|\hat S)dX}  = \int_X {P(Y^{*} = \hat Y|X)P(X|\hat S)dX} ,} 
	\end{split}
	\label{eq_f_appendix}
\end{equation}
	where the last equality is because $X$ contains all the information that $\hat S$ can provide. According to the definition of covariate shift, $P_{s}(Y^{*}|X)=P_{t}(Y^{*}|X)$. Because the source domain and the target domain share a fixed classifier, $P_{s}(\hat Y|X)=P_{t}(\hat Y|X)$. Then, it holds:
\begin{equation}
	\begin{split}
		& {P_s}(Y^{*},\hat Y|X) = {P_s}(Y^{*}|\hat Y,X){P_s}(\hat Y|X)\\
		& = {P_s}(Y^{*}|X){P_s}(\hat Y|X) = {P_t}(Y^{*}|X){P_t}(\hat Y|X) = {P_t}(Y^{*},\hat Y|X).
	\end{split}
\end{equation}
	where the third equality is because the classifier is fixed, $\hat Y$ is a deterministic function of $X$. Therefore, $P_{s}(Y^{*}=\hat Y|X)=P_{t}(Y^{*}=\hat Y|X)$. According to Eq. \ref{eq_f_appendix}, if $P_{s}(Y^{*}=\hat Y|\hat S) = P_{t}(Y^{*}=\hat Y|\hat S)$, then ${\mathbb{E}_{X \sim {P_s}(X|\hat S)}}[P(Y^{*} = \hat Y|X)] = {\mathbb{E}_{X \sim {P_t}(X|\hat S)}}[P(Y^{*} = \hat Y|X)]$. Conversely, if ${\mathbb{E}_{X \sim {P_s}(X|\hat S)}}[P(Y^{*} = \hat Y|X)] = {\mathbb{E}_{X \sim {P_t}(X|\hat S)}}[P(Y^{*} = \hat Y|X)]$, it also holds that $P_{s}(Y^{*}=\hat Y|\hat S) = P_{t}(Y^{*}=\hat Y|\hat S)$.
\end{proof}
	
\begin{theorem}
	\textnormal{\textbf{(Expectation Consistency Condition for Class-wise Calibration)}} $\forall 1 \le k \le K$, $P_{s}(Y_{k}=1|S_{k}) = P_{t}(Y_{k}=1|S_{k})$ if and only if $\mathbb{E}_{X \sim P_{s}(X|S_{k})}[P(Y_{k}=1|X)] = \mathbb{E}_{X \sim P_{t}(X|S_{k})}[P(Y_{k}=1|X)]$, where $P(Y_{k}=1|X)=P_{s}(Y_{k}=1|X)=P_{t}(Y_{k}=1|X)$.
\label{Effect_classwise}
\end{theorem}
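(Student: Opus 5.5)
The plan mirrors the proof of Theorem \ref{Effect} and Theorem \ref{Effect_top}, with the full vector $S$ (or $\hat S$) replaced by the single component $S_k=f_k(X)$. Fix $k\in\{1,\dots,K\}$ and let $P$ stand for either $P_s$ or $P_t$. First I would write $P(Y_k=1\mid S_k)$ as a mixture over the level set of $S_k$, using the total probability theorem:
\begin{equation*}
P(Y_k=1\mid S_k)=\int_X P(Y_k=1\mid X,S_k)\,P(X\mid S_k)\,dX .
\end{equation*}

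The key step is to remove $S_k$ from the inner conditional. Because the classifier $f$ is fixed and shared by both domains, $S_k=f_k(X)$ is a deterministic (measurable) function of $X$. Conditioning on $X$ therefore already determines $S_k$, so the $\sigma$-algebra generated by $(X,S_k)$ equals that generated by $X$. This gives $P(Y_k=1\mid X,S_k)=P(Y_k=1\mid X)$, and hence
\begin{equation*}
P(Y_k=1\mid S_k)=\mathbb{E}_{X\sim P(X\mid S_k)}[P(Y_k=1\mid X)].
\end{equation*}
I expect this to be the main point needing care. Rigorously, it is the tower property $\mathbb{E}[\mathbf{1}\{Y_k=1\}\mid S_k]=\mathbb{E}\big[\mathbb{E}[\mathbf{1}\{Y_k=1\}\mid X]\mid S_k\big]$, valid since $\sigma(S_k)\subseteq\sigma(X)$. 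I would phrase it this way to sidestep densities of $X\mid S_k$, which are degenerate because they are supported on level sets.

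Next I would invoke Definition \ref{Covariate_shift}. It gives $P_s(Y\mid X)=P_t(Y\mid X)$, so the integrand $P(Y_k=1\mid X)$ is the same function of $X$ in both domains. Applying the identity above once with $P_s$ and once with $P_t$, the two sides of the claimed equivalence become literally equal as functions of $S_k$:
\begin{equation*}
P_s(Y_k=1\mid S_k)=\mathbb{E}_{X\sim P_s(X\mid S_k)}[P(Y_k=1\mid X)],\qquad P_t(Y_k=1\mid S_k)=\mathbb{E}_{X\sim P_t(X\mid S_k)}[P(Y_k=1\mid X)].
\end{equation*}
Both directions of the ``if and only if'' then follow at once: equality of the left-hand sides is the same statement as equality of the right-hand sides.

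Finally, I would note that the argument is pointwise in $k$, so it holds for all $1\le k\le K$ simultaneously. I would also remark that equalities are understood for $S_k$-values in the common support of the two domains, almost surely.
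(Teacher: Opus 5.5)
Your proposal is correct and follows the paper's proof essentially step for step. The paper also expands $P(Y_k=1\mid S_k)$ by the law of total probability, drops $S_k$ from the inner conditional because $X$ carries all the information in $S_k$, and then uses $P_s(Y_k=1\mid X)=P_t(Y_k=1\mid X)$ to identify each side of the equivalence with the corresponding expectation; your tower-property phrasing via $\sigma(S_k)\subseteq\sigma(X)$ and your remark about almost-sure equality on the common support tighten that same argument without changing its route.
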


\begin{proof}
By the law of total probability,
\begin{equation}
P(Y_{k}=1|S_{k}) = \int_{X} P(Y_{k}=1,X|S_{k})dX 
= \int_{X} P(Y_{k}=1|X,S_{k}) P(X|S_{k})dX 
= \int_{X} P(Y_{k}=1|X) P(X|S_{k})dX,
\end{equation}
where the last step uses that $X$ contains all information in $S_{k}$ relevant to $Y_{k}$.

Under covariate shift $P_{s}(Y_{k}=1|X)=P_{t}(Y_{k}=1|X)$. Hence:
\begin{equation}
P_{s}(Y_{k}=1|S_{k}) = \int_{X} P(Y_{k}=1|X) P_{s}(X|S_{k}) dX, \quad P_{t}(Y_{k}=1|S_{k}) = \int_{X} P(Y_{k}=1|X) P_{t}(X|S_{k}) dX.
\end{equation}
Therefore $P_{s}(Y_{k}=1|S_{k}) = P_{t}(Y_{k}=1|S_{k})$ iff:
\begin{equation}
\int_{X} P(Y_{k}=1|X) P_{s}(X|S_{k}) dX = \int_{X} P(Y_{k}=1|X) P_{t}(X|S_{k}) dX,
\end{equation}
which is exactly the desired expectation condition.
\end{proof}

\section{Extension of Expectation Consistency Loss}
\label{Loss_Extension}

\subsection{Expectation Consistency Loss for Top-Label Calibration}
Recall the predicted class $\hat Y = {{\mathop{\rm argmax}\nolimits} _k}{\{ {S_k}\} _{1 \le k \le K}}$, its confidence $\hat S = \max {\{ {S_k}\} _{1 \le k \le K}}$, and the true class \(Y^{*}\). Theorem \ref{Effect_top} states that preservation of top-label calibration across domains is equivalent to the expectation consistency condition:
\begin{equation}
\mathbb{E}_{X\sim P_{s}(X|\hat S)}[P(Y^{*}=\hat Y|X)] = \mathbb{E}_{X\sim P_{t}(X|\hat S)}[P(Y^{*}=\hat Y|X)].
\end{equation}
Therefore, \textit{Expectation consistency loss} for top-label calibration can be naturally constructed as:
\begin{equation}
\label{eq:top_ecl}
L_{ecl}^{top} = \mathbb{E}_{P_{t}(\hat S)} \Big| \mathbb{E}_{P_{s}(X|\hat S)} P(Y^{*}=\hat Y|X) - \mathbb{E}_{P_{t}(X|\hat S)} P(Y^{*}=\hat Y|X) \Big|.
\end{equation}
To estimate $P(Y^{*}=\hat Y|X)$ in practice, we train a binary classifier where the label is $1_{Y^{*}=\hat Y}$ and the input data is $X$. This binary classifier can be added to the original classifier as a classification head and trained end-to-end with the original classifier (freeze the backbone when training this classification head). Optionally, this binary classification head can also be calibrated on the source domain to obtain a more reliable estimate of $P(Y^{*}=\hat Y|X)$.

\subsection{Expectation Consistency Loss for Class-wise Calibration}
For class-wise calibration, each coordinate \(S_{k}\) must match \(P(Y_{k}=1|S_{k})\). Theorem \ref{Effect_classwise} implies expectation consistency per class:
\begin{equation}
	\mathbb{E}_{X\sim P_{s}(X|S_{k})}[P(Y_{k}=1|X)] = \mathbb{E}_{X\sim P_{t}(X|S_{k})}[P(Y_{k}=1|X)], \quad \forall k \in \{1,\dots,K\}.
\end{equation}
Therefore, \textit{Expectation consistency loss} for class-wise calibration can be naturally constructed as:
\begin{equation}
	\label{eq:cw_ecl}
	L_{\mathrm{ecl}}^{\mathrm{cw}} = \sum_{k=1}^{K} \Big[ \mathbb{E}_{P_{t}(S_{k})} \Big| \mathbb{E}_{P_{s}(X|S_{k})} P(Y_{k}=1|X) - \mathbb{E}_{P_{t}(X|S_{k})} P(Y_{k}=1|X) \Big|\Big].
\end{equation}
To estimate $P(Y_{k}=1|X)$ in practice, we train an additional classification head on the original classifier's backbone, where the label is $Y_{k}$ (the $k$-th component of the one-hot encoded label) and the input data is $X$. This classification head can be trained end-to-end with the original classifier (freeze the backbone when training this classification head). Optionally, this classification head can also be calibrated on the source domain.

\section{Extensions on Empirical Calculation and Differentiability}
\label{appendix: Empirical Calculation and Differentiability}

\textbf{Empirical Calculation and Differentiability for Top-label Calibration: } For top-label calibration, \textit{Expectation Consistency Loss} can be empirically estimated using confidence binning and Monte Carlo sampling:
\begin{equation}
	\begin{dcases}
			{ {{\hat L_{ecl}^{\mathrm{top}}}} = \sum\limits_{j = 1}^B {\frac{{\sharp b_j^{(t)}}}{{\sharp {D_t}}} \left\lVert {{{\hat {\mathbb{E}}}_{s,j}} - {{\hat {\mathbb{E}}}_{t,j}}} \right\rVert} ,}\\
			{{\hat{\mathbb{E}}_{s,j}} = \frac{1}{{\sharp D_s^{(j)}}}\sum\limits_{x \in D_s^{(j)}} {\hat P(Y^{*}=\hat Y|X = x)} ,}\\
			{{\hat{\mathbb{E}}_{t,j}} = \frac{1}{{\sharp D_t^{(j)}}}\sum\limits_{x \in D_t^{(j)}} {\hat P(Y^{*}=\hat Y|X = x)} ,}
	\end{dcases}
	\label{top_label_estimation}
\end{equation}
where $B$ represents the number of bins, $b_j^{(t)}$ represents the $j$-th bin in the target domain, $\sharp b_{j}^{(t)}$ represents sample size of $b_{j}^{(t)}$, $\sharp D_{t}$ represents sample size of $D_{t}$, ${D_s^{(j)}}$ represents the level set of $b_j^{(t)}$ in the source domain, ${D_t^{(j)}}$ represents the level set of $b_j^{(t)}$ in the target domain, and ${{{\hat P}}(Y^{*} = \hat Y|X = x)}$ represents the observation of ${{{P}}(Y^{*} = \hat Y|X)}$. For differentiability, introduce anchors $a_j=(2j-1)/(2B)$ and weights $\omega_{ij}=\exp(- (\hat S^{(i)}-a_j)^2/\tau)/\sum_{r} \exp(- (\hat S^{(i)}-a_r)^2/\tau)$ with temperature $\tau>0$. Denoting $p^{(i)}=P(Y^{*}=\hat Y|X_{i})$ as the output of the binary classification head (as described in Section \ref{Loss_Extension}), we obtain for each bin $j$ and domain $d\in\{s,t\}$:
\begin{equation}
	\hat{\mathbb{E}}_{d,j}=\frac{\sum_{i} \omega_{ij}^{d} p^{(i)}}{\sum_{i} \omega_{ij}^{d}+\varepsilon}.
\end{equation}
Therefore, the differentiable ECL for top-label calibration is $\hat L_{ecl}^{\text{top}}=\sum_{j=1}^B w_j \lVert\hat{\mathbb{E}}_{s,j}-\hat{\mathbb{E}}_{t,j}\rVert$, where $w_j=\frac{\sum_{i} \omega^{t}_{ij}}{\sum_{r} \sum_{i} \omega^{t}_{ir}}$.

\textbf{Empirical Calculation and Differentiability for Class-wise Calibration: } For class-wise calibration, \textit{Expectation Consistency Loss} can be empirically estimated using confidence binning and Monte Carlo sampling:
\begin{equation}
\begin{dcases}
 \hat L_{ecl}^{\mathrm{cw}} 
= \sum_{k=1}^{K} \sum_{j = 1}^{B} \frac{\sharp b_{k,j}^{(t)}}{\sharp D_t} \, \Big| \hat{\mathbb{E}}_{s,k,j} - \hat{\mathbb{E}}_{t,k,j} \Big|, \\
 \hat{\mathbb{E}}_{s,k,j} 
= \frac{1}{\sharp D_{s,k}^{(j)}} \sum_{x \in D_{s,k}^{(j)}} \hat P(Y_{k}=1\,|\,X=x), \\
 \hat{\mathbb{E}}_{t,k,j} 
= \frac{1}{\sharp D_{t,k}^{(j)}} \sum_{x \in D_{t,k}^{(j)}} \hat P(Y_{k}=1\,|\,X=x).
\end{dcases}
\label{cw_estimation}
\end{equation}
where $B$ is the number of bins per class, $b_{k,j}^{(t)}$ is the $j$-th bin for class-$k$ on the target domain (formed by binning $S_{k}$), $\sharp b_{k,j}^{(t)}$ is its size, $\sharp D_{t}$ is the target sample size, and $D_{s,k}^{(j)}$, $D_{t,k}^{(j)}$ are the level sets of $b_{k,j}^{(t)}$ on source/target domains, respectively. For differentiability, let anchors $a_j = \frac{2j-1}{2B}$ for $j=1,\dots,B$, and define soft weights for a sample $i$ with confidence $S_{k}^{(i)}$:
\begin{equation}
\omega_{k,ij} = \frac{\exp\!\big(- (S_{k}^{(i)}-a_j)^2/\tau\big)}{\sum_{r=1}^{B} \exp\!\big(- (S_{k}^{(i)}-a_r)^2/\tau\big)}, \quad \tau>0.
\end{equation}
For domain $d\in\{s,t\}$, define
\begin{equation}
\hat{\mathbb{E}}_{d,k,j} = \frac{\sum_{i} \omega_{k,ij}^{d} \, p_{k}^{(i)}}{\sum_{i} \omega_{k,ij}^{d} + \varepsilon}, \quad n_{k,j}^{d} = \sum_{i} \omega_{k,ij}^{d}, \quad p_{k}^{(i)} = P(Y_{k}=1|X_{i}),
\end{equation}
with stabilizer $\varepsilon>0$. The differentiable class-wise ECL becomes
\begin{equation}
\hat L_{ecl}^{\mathrm{cw}} = \sum_{k=1}^{K} \sum_{j=1}^{B} w_{k,j} \, \Big| \hat{\mathbb{E}}_{s,k,j} - \hat{\mathbb{E}}_{t,k,j} \Big|, \quad w_{k,j} = \frac{n_{k,j}^{t}}{\sum_{r=1}^{B} n_{k,r}^{t}}.
\label{Differentiable_ECL_CW}
\end{equation}

\section{Proof of Theorem \ref{thm:sample_complexity}}
\label{proof_sample_complexity}
\begin{proof}
	For each bin $j$, define random variables $Z_{s,j} = \lVert \hat{\mathbb{E}}_{s,j} - \mathbb{E}_{P_{s}(X|S)} P(Y|X) \rVert$ and $Z_{t,j} = \lVert \hat{\mathbb{E}}_{t,j} - \mathbb{E}_{P_{t}(X|S)} P(Y|X)			 \rVert$. By the triangle inequality,
	\begin{equation}
		\big|\hat L_{ecl} - L_{ecl}\big| \le \sum_{j=1}^{B} w_j \, (Z_{s,j} + Z_{t,j}).
	\end{equation}
	Using Hoeffding's inequality and a union bound over bins and classes, there exist absolute constants $C_1,C_2>0$ such that, with probability at least $1-\delta$,
	\begin{equation}
		Z_{s,j} \le C_1 \sqrt{\frac{K \log(2BK/\delta)}{n_{s,j}}}, \quad Z_{t,j} \le C_2 \sqrt{\frac{K \log(2BK/\delta)}{n_{t,j}}}, \quad \forall j=1,\dots,B.
	\end{equation}
	Combining these bounds gives the desired result.
\end{proof}

\section{Proof of Theorem \ref{mini_batch_thm}}
\label{proof_mini_batch_thm}
This proof proceeds in two steps. First we show that Eq. \ref{mini_batch_eq} is an auxiliary-variable reformulation of Eq. \ref{Differentiable_ECL}: minimizing the auxiliary variables $u_j^{s},u_j^{t}$ in Eq. \ref{mini_batch_eq} recovers Eq. \ref{Differentiable_ECL}. Second we show that, under the auxiliary-variable formulation, the mini-batch gradient is an unbiased estimator of the full-sample gradient.

\paragraph{Equivalence between Eq. \ref{mini_batch_eq} and Eq. \ref{Differentiable_ECL}.} Fix $\theta$ and consider minimizing the right-hand side of Eq. \ref{mini_batch_eq} with respect to the auxiliary vectors $u_j^s,u_j^t$ for each bin $j$. The terms that depend on $u_j^s,u_j^t$ are
\begin{equation*}
G_j(u_j^s,u_j^t)= w_j\|u_j^s-u_j^t\| + \sum_{i\in D_s} \omega_{i,j}^s \|u_j^s-p_i(\theta)\|^2 + \sum_{i\in D_t} \omega_{i,j}^t \|u_j^t-p_i(\theta)\|^2.
\end{equation*}
Define the soft counts and weighted empirical means
\begin{equation*}
n^{s}_j=\sum_{i\in D_s}\omega_{i,j}^s,\quad n^{t}_j=\sum_{i\in D_t}\omega_{i,j}^t,
\quad \hat{\mathbb{E}}_{s,j}=\frac{1}{n^{s}_j}\sum_{i\in D_s}\omega_{i,j}^s p^{(i)}(\theta),\quad
\hat{\mathbb{E}}_{t,j}=\frac{1}{n^{t}_j}\sum_{i\in D_t}\omega_{i,j}^t p^{(i)}(\theta).
\end{equation*}
The quadratic terms are strongly convex in $u_j^s,u_j^t$, so $G_j$ has a unique minimizer. Taking (sub)gradients $w.r.t.$ $u_j^s,u_j^t$ and setting them to zero yields
\begin{equation*}
2 n^{s}_j(u_j^s-\hat{\mathbb{E}}_{s,j}) + w_j g_j = 0,\qquad 2 n^{t}_j(u_j^t-\hat{\mathbb{E}}_{t,j}) - w_j g_j = 0,
\end{equation*}
where $g_j$ is any subgradient of the norm at $u_j^s-u_j^t$ (a unit vector when the difference is nonzero). Eliminating $g_j$ gives
\begin{equation*}
u_j^s = \hat{\mathbb{E}}_{s,j} - \frac{w_j}{2 n^{s}_j} g_j,\qquad
u_j^t = \hat{\mathbb{E}}_{t,j} + \frac{w_j}{2 n^{t}_j} g_j.
\end{equation*}
When the quadratic penalty terms are minimized (forcing the auxiliary variables to their weighted empirical means), the correction terms vanish and
\begin{equation*}
u_j^s\to\hat{\mathbb{E}}_{s,j},\qquad u_j^t\to\hat{\mathbb{E}}_{t,j}.
\end{equation*}
Substituting these optimal auxiliary values back into Eq. \ref{mini_batch_eq} yields
\begin{equation*}
\sum_{j=1}^B w_j \big\|\hat{\mathbb{E}}_{s,j}-\hat{\mathbb{E}}_{t,j}\big\|,
\end{equation*}
which is exactly Eq. \ref{Differentiable_ECL}. Hence Eq.~\ref{mini_batch_eq} is asymptotically equivalent to Eq.~\ref{Differentiable_ECL}, with an $O(w_j/n_j^d)$ gap (from the subgradient penalties $\frac{w_j}{2n_j^s}g_j$ and $\frac{w_j}{2n_j^t}g_j$) that vanishes as $n_j^s,n_j^t\to\infty$.

\paragraph{Unbiasedness of the mini-batch gradient.} We will first prove that Eq. \ref{Differentiable_ECL} produces a biased gradient estimate on mini-batches, and then prove that Eq. \ref{mini_batch_eq} produces an unbiased gradient estimate. 

Write the differentiable ECL (Eq. \ref{Differentiable_ECL}) as
\begin{equation*}
\hat L_{ecl}(\theta)=\sum_{j=1}^B w_j \big\|\hat{\mathbb{E}}_{s,j}-\hat{\mathbb{E}}_{t,j}\big\|.
\end{equation*}
For notational clarity and for an arbitrary norm $\|\cdot\|$ introduce a subgradient selection
\begin{equation*}
g_j \in \partial\|\hat{\mathbb{E}}_{s,j}-\hat{\mathbb{E}}_{t,j}\|\quad(\text{any choice when the difference is nonzero}).
\end{equation*}
Using the chain rule for a general norm we obtain the full-data gradient
\begin{equation}
\nabla_\theta \hat L_{ecl}(\theta)=\sum_{j=1}^B w_j \left\langle g_j,\; \nabla_\theta\hat{\mathbb{E}}_{s,j}-\nabla_\theta\hat{\mathbb{E}}_{t,j} \right\rangle,
\label{grad_full}
\end{equation}
where, for example, the full-data weighted gradient average is
\begin{equation*}
\nabla_\theta\hat{\mathbb{E}}_{s,j}=\frac{1}{n^{s}_j}\sum_{i\in D_s} \omega_{i,j}^s \, \nabla_\theta p^{(i)}(\theta).
\end{equation*}

Now consider computing the same expression on a random mini-batch. Let
\(\hat{\mathbb{E}}_{s,j}^{\rm m},\hat{\mathbb{E}}_{t,j}^{\rm m}\) be the per-bin weighted means computed from the current mini-batches and choose a measurable subgradient selection
\(g_j^{\rm m}\in\partial\|\hat{\mathbb{E}}_{s,j}^{\rm m}-\hat{\mathbb{E}}_{t,j}^{\rm m}\|\).
The mini-batch gradient contribution for bin $j$ (when using Eq. \ref{Differentiable_ECL} directly on the mini-batch) equals
\begin{equation*}
G_j^{\rm m} = w_j \left\langle g_j^{\rm m},\; \nabla_\theta \hat{\mathbb{E}}_{s,j}^{\rm m} - \nabla_\theta \hat{\mathbb{E}}_{t,j}^{\rm m} \right\rangle.
\end{equation*}
Taking expectation over the random mini-batch sampling (the indices in the sums) and using linearity gives
\begin{equation}
\mathbb{E}[G_j^{\rm m}] = w_j \left( \mathbb{E}[g_j^{\rm m}]^\top \, \mathbb{E}[\nabla_\theta \hat{\mathbb{E}}_{s,j}^{\rm m} - \nabla_\theta \hat{\mathbb{E}}_{t,j}^{\rm m}] \, + \, \mathrm{Cov}\big(g_j^{\rm m},\,\nabla_\theta \hat{\mathbb{E}}_{s,j}^{\rm m} - \nabla_\theta \hat{\mathbb{E}}_{t,j}^{\rm m}\big) \right),
\label{bias_term}
\end{equation}
where the covariance denotes the cross-covariance between the components of the subgradient vector $g_j^{\rm m}$ and the gradient estimator. The covariance need not vanish because $g_j^{\rm m}$ is a nonlinear (sub)differential selection of the same mini-batch samples that produce the per-sample gradients; hence in general
\begin{equation*}
\mathbb{E}[G_j^{\rm m}] \ne w_j g_j^\top \big(\nabla_\theta\hat{\mathbb{E}}_{s,j} - \nabla_\theta\hat{\mathbb{E}}_{t,j}\big).
\end{equation*}
This equality would hold only if $g_j^{\rm m}$ were (in expectation) equal to $g_j$ and uncorrelated with the mini-batch gradient estimator — a condition that generally fails because of the nonlinear subgradient selection.

Eq. \ref{mini_batch_eq} (Eq.13) remedies this issue by introducing auxiliary variables $u_j^{s},u_j^{t}$. Concretely, let us set the auxiliaries to the full-data weighted means (functions of $\theta$ but independent of the current mini-batch indices):
\begin{equation*}
u_j^{s,\mathrm{full}}:=\hat{\mathbb{E}}_{s,j}=\frac{1}{n^{s}_j}\sum_{i\in D_s}\omega_{i,j}^s p^{(i)}(\theta),\qquad
u_j^{t,\mathrm{full}}:=\hat{\mathbb{E}}_{t,j}=\frac{1}{n^{t}_j}\sum_{i\in D_t}\omega_{i,j}^t p^{(i)}(\theta).
\end{equation*}
Define the fixed unit vector
\begin{equation*}
v_j^{\mathrm{full}}:=\frac{u_j^{s,\mathrm{full}}-u_j^{t,\mathrm{full}}}{\|u_j^{s,\mathrm{full}}-u_j^{t,\mathrm{full}}\|}.
\end{equation*}
If we compute the mini-batch gradient of Eq. \ref{mini_batch_eq} while treating $u_j^{d}=u_j^{d,\mathrm{full}}$ as fixed (i.e. independent of the current mini-batch samples), the bin-$j$ contribution equals
\begin{equation*}
	\tilde G_j^{\rm m} = w_j \left\langle v_j^{\mathrm{full}},\; \frac{1}{|D_s^{\rm m}|}\sum_{i\in D_s^{\rm m}} \omega_{i,j}^s \, \nabla_\theta p^{(i)}(\theta) - \frac{1}{|D_t^{\rm m}|}\sum_{i\in D_t^{\rm m}} \omega_{i,j}^t \, \nabla_\theta p^{(i)}(\theta) \right\rangle.
\end{equation*}
Taking expectation over the random mini-batch sampling (the indices in the sums) and using linearity gives
\begin{equation*}
\mathbb{E}[\tilde G_j^{\rm m}] = w_j \left\langle v_j^{\mathrm{full}},\; \frac{1}{n^{s}_j}\sum_{i\in D_s} \omega_{i,j}^s \, \nabla_\theta p^{(i)}(\theta) - \frac{1}{n^{t}_j}\sum_{i\in D_t} \omega_{i,j}^t \, \nabla_\theta p^{(i)}(\theta) \right\rangle.
\end{equation*}
The right-hand side is exactly the full-data bin-$j$ term in Eq. \ref{grad_full}; summing over $j$ yields
\begin{equation*}
\mathbb{E}\Big[\sum_{j=1}^B \tilde G_j^{\rm m}\Big] = \nabla_\theta \hat L_{ecl}(\theta).
\end{equation*}
Thus, when Eq. \ref{mini_batch_eq} is used with auxiliaries taken from an estimate independent of the current mini-batch (e.g. full-data means, a large buffer, or a slow running average), the mini-batch gradient is an unbiased estimator of the full-sample gradient.

\begin{minipage}[t]{0.48\linewidth}
\begin{algorithm}[H]
	\caption{Top-label ECL Mini-Batch.}
	\begin{algorithmic}[1]
	\STATE \textbf{Input:}
		\STATE \quad bins $j=1\ldots B$, hyperparameters $\lambda,\alpha_{\text{ema}},N_{\text{prox}}$;
		\STATE \quad $u_j^s = 0 \in \mathbb{R}, \forall j$; $u_j^t = 0 \in \mathbb{R}, \forall j$;
	\FOR{each iteration}
		\STATE Sample mini-batches $D_s^m,D_t^m$; 
		\STATE Compute weights $\omega_{ij}^s,\omega_{ij}^t$;
		\STATE $n_{s,j} \leftarrow \sum_{i\in D_s^m}\omega_{ij}^s$; $n_{t,j} \leftarrow \sum_{i\in D_t^m}\omega_{ij}^t$;
		\STATE $m_{s,j} \leftarrow \sum_{i\in D_s^m}\omega_{ij}^s P(Y^{*}=\hat Y| X=x_i)$;
		\STATE $m_{t,j} \leftarrow \sum_{i\in D_t^m}\omega_{ij}^t P(Y^{*}=\hat Y| X=x_i)$;
		\STATE $w_j \leftarrow n_{t,j} / \sum_{r=1}^{B} n_{t,r}$;
		\STATE $L_{\text{ecl}} \leftarrow 0$;
		\FOR{each bin $j$}
			\STATE $u_s,u_t\leftarrow$ cached $u_j^s,u_j^t$
			\FOR{$i=1$ to $N_{\text{prox}}$}
				\STATE $v_s\leftarrow (m_{s,j}/n_{s,j})-u_t$, $\tau_s=\dfrac{w_j}{2 n_{s,j}}$
				\STATE $u_s\leftarrow u_t + \mathrm{shrink}(v_s,\tau_s)$
				\STATE $v_t\leftarrow (m_{t,j}/n_{t,j})-u_s$, $\tau_t=\dfrac{w_j}{2 n_{t,j}}$
				\STATE $u_t\leftarrow u_s + \mathrm{shrink}(v_t,\tau_t)$
			\ENDFOR
			\STATE $\tilde u_j^s,\tilde u_j^t\leftarrow u_s.{\rm detach}(),\;u_t.{\rm detach}()$
			\STATE $u_j^s\leftarrow(1-\alpha_{\text{ema}})u_j^s+\alpha_{\text{ema}}\tilde u_j^s$
			\STATE $u_j^t\leftarrow(1-\alpha_{\text{ema}})u_j^t+\alpha_{\text{ema}}\tilde u_j^t$
			\STATE $L_{\text{ecl}} \mathrel{+}= \sum\limits_{i\in D_s^m}\omega_{ij}^s\|\tilde u_j^s-P(Y^{*}=\hat Y| X=x_i)\|^2$
			\STATE $L_{\text{ecl}} \mathrel{+}=  \sum\limits_{i\in D_t^m}\omega_{ij}^t\|\tilde u_j^t-P(Y^{*}=\hat Y| X=x_i)\|^2$
		\ENDFOR
		\STATE Compute the cross-entropy loss $L_{\text{ce}}$
		\STATE Backpropagate $L_{\text{ce}} + \lambda L_{\text{ecl}}$ and update $\theta$
	\ENDFOR
	\STATE \textbf{Return:} $\theta$
	\end{algorithmic}
	\label{alg:top_label_mb}
\end{algorithm}
\end{minipage}
\hfill
\begin{minipage}[t]{0.48\linewidth}
\begin{algorithm}[H]
	\caption{Class-wise ECL Mini-Batch.}
	\begin{algorithmic}[1]
	\STATE \textbf{Input:}
		\STATE \quad bins $j=1\ldots B$, hyperparameters $\lambda,\alpha_{\text{ema}},N_{\text{prox}}$;
		\STATE \quad $u_{k,j}^s = 0 \in \mathbb{R}, \forall k,j$; $u_{k,j}^t = 0 \in \mathbb{R}, \forall k,j$;
	\FOR{each iteration}
		\STATE Sample mini-batches $D_s^m,D_t^m$; $L_{\text{ecl}} \leftarrow 0$;
		\FOR{each class $k=1$ to $K$}
			\STATE Compute weights $\omega_{k,ij}^s,\omega_{k,ij}^t$;
            \STATE $n_{s,j} \leftarrow \sum_{i\in D_s^m}\omega_{k,ij}^s$; $n_{t,j} \leftarrow \sum_{i\in D_t^m}\omega_{k,ij}^t$;
            \STATE $m_{s,j} \leftarrow \sum_{i\in D_s^m}\omega_{k,ij}^s p^{(i)}_{k}(\theta)$;
		    \STATE $m_{t,j} \leftarrow \sum_{i\in D_t^m}\omega_{k,ij}^t p^{(i)}_{k}(\theta)$;
		    \STATE $w_{k,j} \leftarrow n_{t,j} / \sum_{r=1}^{B} n_{t,r}$;
			\FOR{each bin $j$}
				\STATE $u_s,u_t\leftarrow$ cached $u_{k,j}^s,u_{k,j}^t$
				\FOR{$i=1$ to $N_{\text{prox}}$}
					\STATE $v_s\leftarrow (m_{s,j}/n_{s,j})-u_t$, $\tau_s=\dfrac{w_{k,j}}{2 n_{s,j}}$
					\STATE $u_s\leftarrow u_t + \mathrm{shrink}(v_s,\tau_s)$
					\STATE $v_t\leftarrow (m_{t,j}/n_{t,j})-u_s$, $\tau_t=\dfrac{w_{k,j}}{2 n_{t,j}}$
					\STATE $u_t\leftarrow u_s + \mathrm{shrink}(v_t,\tau_t)$
				\ENDFOR
				\STATE $\tilde u_{k,j}^s,\tilde u_{k,j}^t\leftarrow u_s.{\rm detach}(),\;u_t.{\rm detach}()$
				\STATE $u_{k,j}^s\leftarrow(1-\alpha_{\text{ema}})u_{k,j}^s+\alpha_{\text{ema}}\tilde u_{k,j}^s$
				\STATE $u_{k,j}^t\leftarrow(1-\alpha_{\text{ema}})u_{k,j}^t+\alpha_{\text{ema}}\tilde u_{k,j}^t$
				\STATE $L_{\text{ecl}} \mathrel{+}= \sum\limits_{i\in D_s^m}\omega_{k,ij}^s\|\tilde u_{k,j}^s-p^{(i)}_{k}(\theta)\|^2$
				\STATE $L_{\text{ecl}} \mathrel{+}=  \sum\limits_{i\in D_t^m}\omega_{k,ij}^t\|\tilde u_{k,j}^t-p^{(i)}_{k}(\theta)\|^2$
			\ENDFOR
		\ENDFOR
		\STATE Compute the cross-entropy loss $L_{\text{ce}}$
		\STATE Backpropagate $L_{\text{ce}} + \lambda L_{\text{ecl}}$ and update $\theta$
	\ENDFOR
	\STATE \textbf{Return:} $\theta$
	\end{algorithmic}
	\label{alg:class_wise_mb}
\end{algorithm}
\end{minipage}
\section{Extension of ECL Mini-Batch Training}
\label{Algorithom_Appendix}
Algorithm \ref{alg:mini_batch_updates} details the ECL mini-batch training for canonical calibration. Here, we present the analogous algorithms for top-label calibration (Algorithm \ref{alg:top_label_mb}) and class-wise calibration (Algorithm \ref{alg:class_wise_mb}). They employ the same auxiliary variable strategy to resolve the bias in mini-batch gradients. In Algorithm \ref{alg:top_label_mb}, $P(Y^{*}=\hat Y| X=x)$ can be obtained by training a binary
classifier where the label is $1_{Y^{*}=\hat Y}$ and the input data is $X$. Moreover, this binary classifier does not need to be trained separately. It can be added to the original classifier as a classification head and trained end-to-end with the original classifier (freeze the backbone when training this classification head).
\section{Results}
\label{results_in_appendix}
\textbf{Other experimental settings:} 
The batch size in the experiment is uniformly set to 100. Adam optimizer with a learning rate of 0.001 is used to train the classifier for 100 epochs. All experiments were conducted on Intel$^{\circledR}$ Core$^{TM}$ I7-10700 CPU with 3.70GHz and 125.5GB memory, 10 NVIDIA GeForce RTX 3090 graphics cards (each with 24GB of video memory), Ubuntu 20.04.3 LTS, Python 3.11.11, and Torch 2.4.1+cu118. We calibrate the classification head used to estimate $P(Y|X)$ (or $P(Y^{*}=\hat Y|X)$ for top-label calibration) on the source domain using Soft-ECE loss. This classification head has the same network structure as the classification head in the original classifier, and uses the same hyperparameters during training. All images in the digit recognition dataset were standardized to 3-channel RGB format and resized to a resolution of 28$\times$28 pixels. All images in PACS and ImageNet-Sketch were standardized to 3-channel RGB format and resized to a resolution of 224$\times$224 pixels.

\subsection{Results on Simulated Covariate Shifts Data}
\label{Simulated_appendix}
Figure~\ref{fig:uniform} shows the calibration results under a uniformly distributed covariate shift, complementing the normally distributed case in Figure~\ref{fig_before_abstract}. Consistent with the normal case, ECL achieves the lowest calibration error across all three paradigms.
\begin{figure*}[t]
	\begin{center}
		\centerline{\includegraphics[width=1.0\textwidth]{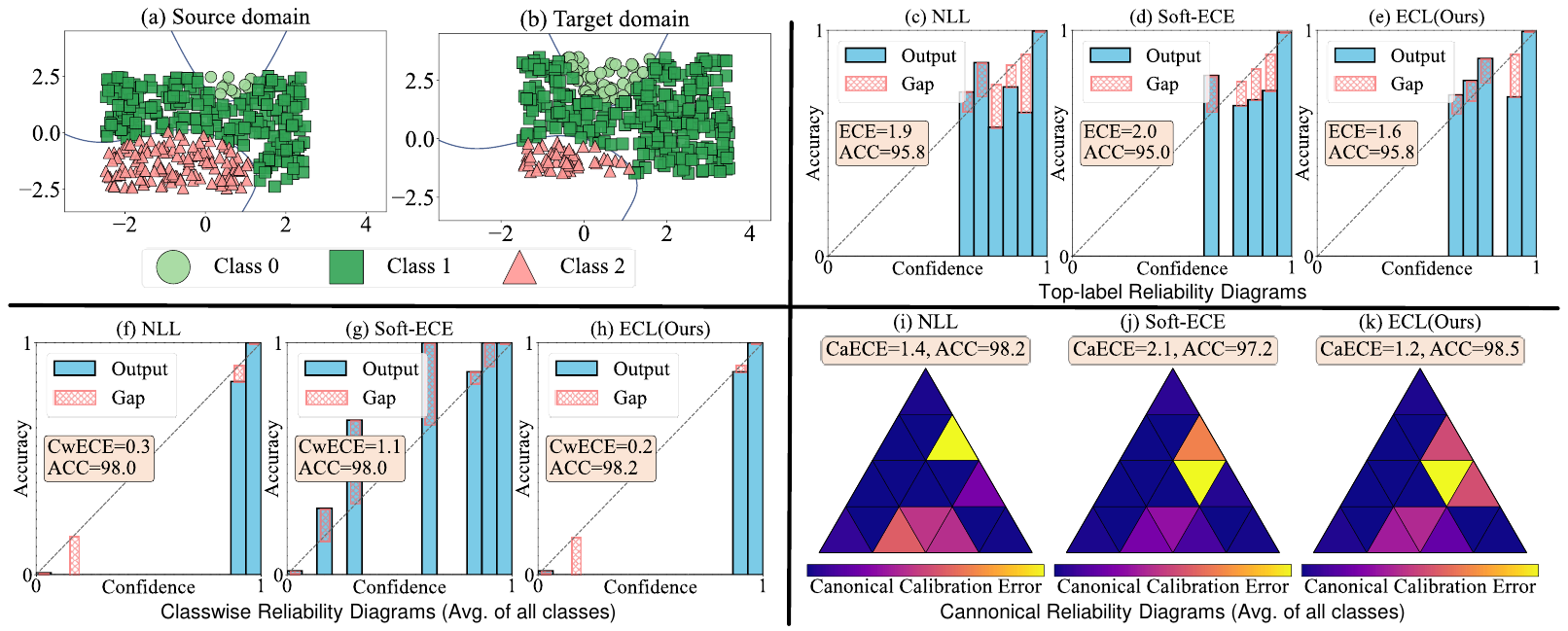}}
		\caption{The calibration results are presented using simulated data under a uniformly distributed covariate shift. From the calibration metric on the target domain and the reliability diagram of the calibrated classifier, ECL achieves the smallest calibration error.}
		\label{fig:uniform}
	\end{center}
	\vskip -0.2in
\end{figure*}

\subsection{Results for Top-label Calibration}
\label{Results_top_label}
Table \ref{Calibration_comparison_on_PACS_ImageNet_transformed} details the top-label calibration performance on the PACS and ImageNet-Sketch datasets. Several key observations can be drawn regarding the effectiveness of ECL. First, regarding robustness to large shifts, on the ImageNet-Sketch dataset—which presents a severe distribution shift (source ImageNet vs. target Sketch)—uncalibrated models exhibit extreme ECE values exceeding 55\%. ECL substantially reduces these errors (often to below 15\% across the tested architectures), demonstrating its capability to handle substantial domain gaps. Second, while PseudoCal serves as a strong baseline, ECL is generally competitive and frequently achieves a lower ECE. For instance, in the PACS ($\to$ Cartoon) task using Wide-Res50, ECL achieves an ECE of 7.61\%, outperforming PseudoCal (16.24\%) and improving upon DRL (8.36\%). Finally, the method remains effective across diverse architectures, from standard CNNs (ResNet, DenseNet) to Vision Transformers (ViT-L), suggesting that the Expectation Consistency condition captures a model-agnostic principle.

\begin{table*}[t]
    \centering
    \footnotesize
    \caption{ECE (\%) for top-label calibration on PACS and ImageNet-Sketch datasets. The reported results represent the mean and standard deviation derived from ten runs.}
    \setlength\tabcolsep{1.5pt}
    \renewcommand{\arraystretch}{1.}
    \begin{tabular}{clccccccccc|cc}
        \toprule
        \multicolumn{2}{c}{\multirow{2}{*}{\textbf{Datasets}}}&\multicolumn{9}{c|}{\textbf{ECE} $\bm{\downarrow}$}& \\
        & &Uncal&Soft-ECE&DECE&KDE&TS&TransCal&DRL&PseudoCal&\cellcolor{color1}ECL (Ours)&\multirow{-2}{*}{\textbf{Oracle} $\bm{\downarrow}$}&\multirow{-2}{*}{\textbf{$\Delta$ACC(\%)}}\\
        \midrule
        \multirow{16}{*}{\rotatebox{90}{\textbf{PACS}}}&\textbf{$\to$ \textit{Photo}}& & & & & & & & & & \\
        &ResNet50&22.3$_{\pm 2.16}$&22.1$_{\pm 1.83}$&22.8$_{\pm 1.99}$&21.8$_{\pm 1.72}$&20.9$_{\pm 1.53}$&22.2$_{\pm 1.66}$&9.02$_{\pm 0.57}$&7.33$_{\pm 0.44}$&\cellcolor{color1}6.87$_{\pm 0.34}$&3.84$_{\pm 0.23}$&+0.72$_{\pm 0.17}$\\
        &DenseNet121&9.78$_{\pm 0.96}$&9.88$_{\pm 0.91}$&9.54$_{\pm 0.91}$&10.2$_{\pm 0.86}$&9.63$_{\pm 0.69}$&9.31$_{\pm 0.63}$&7.91$_{\pm 0.41}$&6.61$_{\pm 0.54}$&\cellcolor{color1}5.96$_{\pm 0.27}$&1.84$_{\pm 0.13}$&-0.83$_{\pm 0.23}$\\
        &Wide-Res50&16.9$_{\pm 1.42}$&17.2$_{\pm 1.24}$&17.8$_{\pm 1.39}$&16.8$_{\pm 1.18}$&16.2$_{\pm 1.37}$&7.27$_{\pm 0.47}$&4.39$_{\pm 0.42}$&2.83$_{\pm 0.11}$&\cellcolor{color1}2.68$_{\pm 0.33}$&1.59$_{\pm 0.17}$&+0.69$_{\pm 0.22}$\\
        \cmidrule{2-13}
        &\textbf{$\to$ \textit{Art}}& & & & & & & & & & & \\
        &ResNet50&33.1$_{\pm 3.24}$&32.1$_{\pm 2.97}$&33.2$_{\pm 3.11}$&31.6$_{\pm 3.06}$&31.9$_{\pm 3.24}$&17.1$_{\pm 1.51}$&17.1$_{\pm 1.14}$&7.88$_{\pm 0.78}$&\cellcolor{color1}7.22$_{\pm 0.53}$&2.12$_{\pm 0.08}$&-1.24$_{\pm 0.41}$\\
        &DenseNet121&23.2$_{\pm 2.04}$&22.8$_{\pm 1.94}$&23.6$_{\pm 2.14}$&23.1$_{\pm 1.86}$&22.7$_{\pm 1.96}$&22.4$_{\pm 1.88}$&6.16$_{\pm 0.59}$&9.94$_{\pm 0.74}$&\cellcolor{color1}5.89$_{\pm 0.36}$&2.24$_{\pm 0.21}$&+1.06$_{\pm 0.39}$\\
        &Wide-Res50&29.9$_{\pm 2.78}$&29.3$_{\pm 2.57}$&30.4$_{\pm 2.64}$&29.7$_{\pm 2.42}$&30.1$_{\pm 2.53}$&16.1$_{\pm 1.23}$&15.8$_{\pm 1.42}$&8.43$_{\pm 0.67}$&\cellcolor{color1}7.97$_{\pm 0.52}$&3.14$_{\pm 0.24}$&-0.96$_{\pm 0.29}$\\
        \cmidrule{2-13}
        &\textbf{$\to$ \textit{Cartoon}}& & & & & & & & & & & \\
        &ResNet50&25.1$_{\pm 2.26}$&25.1$_{\pm 2.08}$&25.3$_{\pm 2.42}$&24.9$_{\pm 2.07}$&24.8$_{\pm 1.91}$&25.2$_{\pm 2.39}$&6.69$_{\pm 0.48}$&5.71$_{\pm 0.43}$&\cellcolor{color1}5.46$_{\pm 0.43}$&2.73$_{\pm 0.26}$&+0.56$_{\pm 0.12}$\\
        &DenseNet121&18.4$_{\pm 1.48}$&18.7$_{\pm 1.36}$&17.8$_{\pm 1.56}$&18.4$_{\pm 1.44}$&18.3$_{\pm 1.73}$&11.3$_{\pm 0.91}$&10.9$_{\pm 1.21}$&2.21$_{\pm 0.09}$&\cellcolor{color1}2.04$_{\pm 0.16}$&2.04$_{\pm 0.18}$&-0.74$_{\pm 0.22}$\\
        &Wide-Res50&25.4$_{\pm 1.98}$&24.9$_{\pm 1.88}$&25.9$_{\pm 2.06}$&25.6$_{\pm 1.72}$&25.2$_{\pm 1.76}$&23.7$_{\pm 1.88}$&8.36$_{\pm 0.67}$&16.24$_{\pm 1.44}$&\cellcolor{color1}7.61$_{\pm 0.28}$&2.73$_{\pm 0.19}$&+1.26$_{\pm 0.39}$\\
        \cmidrule{2-13}
        &\textbf{$\to$ \textit{Sketch}}& & & & & & & & & & & \\
        &ResNet50&23.1$_{\pm 1.87}$&23.9$_{\pm 1.97}$&22.9$_{\pm 2.06}$&23.4$_{\pm 1.88}$&23.4$_{\pm 2.24}$&11.4$_{\pm 0.96}$&16.2$_{\pm 1.29}$&10.9$_{\pm 0.98}$&\cellcolor{color1}10.3$_{\pm 0.82}$&1.54$_{\pm 0.13}$&-1.53$_{\pm 0.47}$\\
        &DenseNet121&23.6$_{\pm 1.57}$&22.8$_{\pm 1.43}$&23.8$_{\pm 1.72}$&23.2$_{\pm 1.54}$&22.9$_{\pm 1.96}$&9.09$_{\pm 0.79}$&3.39$_{\pm 0.19}$&5.39$_{\pm 0.51}$&\cellcolor{color1}3.17$_{\pm 0.28}$&2.66$_{\pm 0.16}$&+0.86$_{\pm 0.21}$\\
        &Wide-Res50&19.2$_{\pm 1.41}$&19.6$_{\pm 1.36}$&18.8$_{\pm 1.51}$&19.1$_{\pm 1.26}$&18.9$_{\pm 1.66}$&10.01$_{\pm 0.96}$&6.81$_{\pm 0.68}$&2.79$_{\pm 0.21}$&\cellcolor{color1}2.67$_{\pm 0.22}$&2.69$_{\pm 0.28}$&-0.48$_{\pm 0.13}$\\
        \midrule
        \multirow{4}{*}{\rotatebox{90}{\textbf{I-S}}}&\textbf{$\to$ \textit{Sketch}}& & & & & & & & & & & \\
        &ResNet152&64.3$_{\pm 4.48}$&63.6$_{\pm 4.12}$&65.1$_{\pm 4.64}$&63.4$_{\pm 4.36}$&62.8$_{\pm 4.19}$&60.1$_{\pm 3.94}$&33.3$_{\pm 2.34}$&17.3$_{\pm 1.68}$&\cellcolor{color1}14.6$_{\pm 0.58}$&1.54$_{\pm 0.09}$&+0.92$_{\pm 0.31}$\\
        &DenseNet161&69.1$_{\pm 3.62}$&68.7$_{\pm 3.47}$&69.8$_{\pm 3.86}$&68.3$_{\pm 3.57}$&68.3$_{\pm 4.66}$&58.4$_{\pm 4.33}$&36.9$_{\pm 2.69}$&13.2$_{\pm 1.21}$&\cellcolor{color1}11.7$_{\pm 0.46}$&1.27$_{\pm 0.14}$&+1.39$_{\pm 0.59}$\\
        &ViT-L&55.8$_{\pm 4.34}$&55.1$_{\pm 4.07}$&56.7$_{\pm 4.49}$&54.9$_{\pm 4.26}$&53.7$_{\pm 4.16}$&32.7$_{\pm 2.38}$&27.1$_{\pm 1.79}$&15.7$_{\pm 1.24}$&\cellcolor{color1}12.9$_{\pm 0.54}$&1.47$_{\pm 0.11}$&+0.92$_{\pm 0.28}$\\
        \bottomrule 
    \end{tabular}
    \label{Calibration_comparison_on_PACS_ImageNet_transformed}
\end{table*}

\subsection{Results for Class-wise Calibration}
\label{Results_classwise}
Table \ref{class_wise_comparison_all} reports the Class-wise ECE (CwECE) results. Two major trends are evident from the experimental data. First, unlike top-label calibration which focuses on the predicted class only, class-wise calibration requires precision across all categories. ECL achieves the lowest (or near-lowest) CwECE in many experimental settings (spanning datasets and models), indicating that it improves calibration not only for the dominant class. Second, regarding handling hard tasks, on the Digit recognition benchmarks (included in Table \ref{class_wise_comparison_all}), the advantage of ECL is most prominent on the SVHN dataset. For the LeNet-5 architecture, ECL reduces CwECE from 15.8\% (Uncal) to 5.88\%, improving upon most baselines (e.g., PseudoCal at 12.7\%). This suggests that ECL's auxiliary variable optimization can be particularly effective in scenarios with complex background noise and lower image quality.

\begin{table*}[t]
	\centering
	\footnotesize
	\caption{CwECE (\%) for class-wise calibration on Digit, PACS and ImageNet-Sketch datasets. The reported results represent the mean and standard deviation derived from ten runs.}
	\setlength\tabcolsep{1.5pt}
	\renewcommand{\arraystretch}{1.}
	\begin{tabular}{clccccccccc|cc}
		\toprule
		\multicolumn{2}{c}{\multirow{2}{*}{\textbf{Datasets}}}&\multicolumn{9}{c|}{\textbf{CwECE} $\bm{\downarrow}$}& & \\
		& &Uncal&Soft-ECE&DECE&KDE&TS&TransCal&DRL&PseudoCal&\cellcolor{color1}ECL (Ours)&\multirow{-2}{*}{\textbf{Oracle} $\bm{\downarrow}$}&\multirow{-2}{*}{\textbf{$\Delta$ACC(\%)}}\\
		\midrule
		\multirow{12}{*}{\rotatebox{90}{\textbf{Digit}}}&\textbf{$\to$ \textit{MNIST}}& & & & & & & & & & & \\
		&LeNet-5&5.41$_{\pm 0.47}$&5.54$_{\pm 0.37}$&5.31$_{\pm 0.43}$&5.49$_{\pm 0.39}$&5.18$_{\pm 0.33}$&4.92$_{\pm 0.39}$&3.79$_{\pm 0.24}$&1.86$_{\pm 0.12}$&\cellcolor{color1}1.66$_{\pm 0.12}$&0.16$_{\pm 0.01}$&-0.44$_{\pm 0.09}$\\
		&ResNet20&3.14$_{\pm 0.31}$&3.23$_{\pm 0.22}$&3.13$_{\pm 0.28}$&3.21$_{\pm 0.22}$&3.06$_{\pm 0.23}$&2.47$_{\pm 0.17}$&1.94$_{\pm 0.14}$&1.46$_{\pm 0.14}$&\cellcolor{color1}1.41$_{\pm 0.11}$&0.39$_{\pm 0.01}$&+0.62$_{\pm 0.11}$\\
		&DenseNet40&4.69$_{\pm 0.41}$&4.74$_{\pm 0.37}$&4.51$_{\pm 0.29}$&4.66$_{\pm 0.43}$&4.46$_{\pm 0.39}$&3.94$_{\pm 0.27}$&2.81$_{\pm 0.21}$&1.77$_{\pm 0.19}$&\cellcolor{color1}1.57$_{\pm 0.12}$&0.38$_{\pm 0.06}$&+0.23$_{\pm 0.11}$\\
		\cmidrule{2-13}
		&\textbf{$\to$ \textit{USPS}}& & & & & & & & & & & \\
		&LeNet-5&6.87$_{\pm 0.54}$&6.96$_{\pm 0.47}$&6.77$_{\pm 0.57}$&6.91$_{\pm 0.44}$&6.63$_{\pm 0.46}$&5.84$_{\pm 0.36}$&4.13$_{\pm 0.31}$&2.19$_{\pm 0.19}$&\cellcolor{color1}2.11$_{\pm 0.14}$&0.57$_{\pm 0.03}$&-0.33$_{\pm 0.09}$\\
		&ResNet20&2.54$_{\pm 0.23}$&2.66$_{\pm 0.19}$&2.46$_{\pm 0.24}$&2.59$_{\pm 0.23}$&2.46$_{\pm 0.22}$&2.14$_{\pm 0.16}$&1.73$_{\pm 0.14}$&1.17$_{\pm 0.07}$&\cellcolor{color1}1.24$_{\pm 0.09}$&0.63$_{\pm 0.01}$&+0.42$_{\pm 0.18}$\\
		&DenseNet40&3.99$_{\pm 0.36}$&4.03$_{\pm 0.29}$&3.83$_{\pm 0.33}$&3.99$_{\pm 0.28}$&3.63$_{\pm 0.24}$&3.27$_{\pm 0.19}$&2.14$_{\pm 0.14}$&1.48$_{\pm 0.12}$&\cellcolor{color1}1.18$_{\pm 0.12}$&0.72$_{\pm 0.06}$&-0.16$_{\pm 0.04}$\\
		\cmidrule{2-13}
		&\textbf{$\to$ \textit{SVHN}}& & & & & & & & & & & \\
		&LeNet-5&15.8$_{\pm 1.26}$&15.8$_{\pm 1.17}$&15.4$_{\pm 1.37}$&16.2$_{\pm 1.26}$&15.4$_{\pm 1.19}$&14.4$_{\pm 1.12}$&8.54$_{\pm 0.61}$&12.7$_{\pm 0.92}$&\cellcolor{color1}5.88$_{\pm 0.43}$&0.44$_{\pm 0.02}$&+0.84$_{\pm 0.24}$\\
		&ResNet20&18.4$_{\pm 1.44}$&18.2$_{\pm 1.34}$&18.8$_{\pm 1.54}$&18.1$_{\pm 1.29}$&18.2$_{\pm 1.37}$&15.1$_{\pm 1.17}$&9.86$_{\pm 0.74}$&11.2$_{\pm 0.88}$&\cellcolor{color1}8.97$_{\pm 0.59}$&0.22$_{\pm 0.01}$&-1.04$_{\pm 0.36}$\\
		&DenseNet40&21.3$_{\pm 1.64}$&21.8$_{\pm 1.53}$&21.2$_{\pm 1.76}$&21.4$_{\pm 1.49}$&20.6$_{\pm 1.49}$&18.4$_{\pm 1.31}$&11.3$_{\pm 0.84}$&15.2$_{\pm 1.14}$&\cellcolor{color1}8.16$_{\pm 0.57}$&0.39$_{\pm 0.03}$&+0.53$_{\pm 0.17}$\\
		\midrule
		\multirow{16}{*}{\rotatebox{90}{\textbf{PACS}}}&\textbf{$\to$ \textit{Photo}}& & & & & & & & & & & \\
		&ResNet50&7.87$_{\pm 0.31}$&7.89$_{\pm 0.43}$&7.77$_{\pm 0.38}$&7.84$_{\pm 0.44}$&7.62$_{\pm 0.32}$&6.86$_{\pm 0.31}$&5.99$_{\pm 0.26}$&3.24$_{\pm 0.21}$&\cellcolor{color1}2.92$_{\pm 0.12}$&0.58$_{\pm 0.01}$&+0.48$_{\pm 0.09}$\\
		&DenseNet121&8.53$_{\pm 0.47}$&8.64$_{\pm 0.44}$&8.46$_{\pm 0.52}$&8.59$_{\pm 0.42}$&8.37$_{\pm 0.37}$&7.58$_{\pm 0.29}$&6.28$_{\pm 0.23}$&3.87$_{\pm 0.24}$&\cellcolor{color1}3.56$_{\pm 0.19}$&0.61$_{\pm 0.01}$&+0.29$_{\pm 0.11}$\\
		&Wide-Res50&6.99$_{\pm 0.38}$&6.99$_{\pm 0.32}$&6.81$_{\pm 0.39}$&6.99$_{\pm 0.32}$&6.78$_{\pm 0.32}$&6.17$_{\pm 0.31}$&5.24$_{\pm 0.26}$&2.83$_{\pm 0.14}$&\cellcolor{color1}2.58$_{\pm 0.09}$&0.48$_{\pm 0.04}$&+0.34$_{\pm 0.12}$\\
		\cmidrule{2-13}
		&\textbf{$\to$ \textit{Art}}& & & & & & & & & & & \\
		&ResNet50&13.3$_{\pm 0.64}$&13.9$_{\pm 0.73}$&13.2$_{\pm 0.78}$&13.3$_{\pm 0.64}$&12.8$_{\pm 0.54}$&11.1$_{\pm 0.47}$&8.58$_{\pm 0.36}$&5.28$_{\pm 0.23}$&\cellcolor{color1}4.86$_{\pm 0.24}$&0.84$_{\pm 0.02}$&-0.28$_{\pm 0.12}$\\
		&DenseNet121&14.4$_{\pm 0.73}$&14.7$_{\pm 0.82}$&13.6$_{\pm 0.84}$&14.3$_{\pm 0.77}$&13.1$_{\pm 0.63}$&11.4$_{\pm 0.51}$&9.28$_{\pm 0.47}$&5.89$_{\pm 0.39}$&\cellcolor{color1}5.94$_{\pm 0.31}$&0.94$_{\pm 0.04}$&-0.18$_{\pm 0.11}$\\
		&Wide-Res50&12.6$_{\pm 0.56}$&12.8$_{\pm 0.67}$&12.7$_{\pm 0.71}$&12.8$_{\pm 0.63}$&11.8$_{\pm 0.56}$&9.96$_{\pm 0.46}$&7.84$_{\pm 0.32}$&4.58$_{\pm 0.23}$&\cellcolor{color1}4.13$_{\pm 0.17}$&0.78$_{\pm 0.08}$&+0.44$_{\pm 0.14}$\\
		\cmidrule{2-13}
		&\textbf{$\to$ \textit{Cartoon}}& & & & & & & & & & & \\
		&ResNet50&16.4$_{\pm 0.84}$&16.6$_{\pm 0.88}$&16.4$_{\pm 0.93}$&16.3$_{\pm 0.92}$&15.4$_{\pm 0.73}$&14.1$_{\pm 0.64}$&10.3$_{\pm 0.54}$&6.86$_{\pm 0.43}$&\cellcolor{color1}6.46$_{\pm 0.34}$&1.17$_{\pm 0.08}$&+0.63$_{\pm 0.24}$\\
		&DenseNet121&17.1$_{\pm 0.98}$&17.3$_{\pm 1.08}$&17.1$_{\pm 1.16}$&17.3$_{\pm 1.02}$&16.6$_{\pm 0.87}$&14.7$_{\pm 0.73}$&10.9$_{\pm 0.67}$&7.53$_{\pm 0.56}$&\cellcolor{color1}7.13$_{\pm 0.49}$&1.27$_{\pm 0.09}$&+0.32$_{\pm 0.19}$\\
		&Wide-Res50&15.6$_{\pm 0.81}$&15.9$_{\pm 0.81}$&15.3$_{\pm 0.84}$&15.9$_{\pm 0.81}$&14.7$_{\pm 0.72}$&12.9$_{\pm 0.59}$&9.86$_{\pm 0.51}$&6.16$_{\pm 0.41}$&\cellcolor{color1}5.83$_{\pm 0.31}$&1.06$_{\pm 0.03}$&+0.51$_{\pm 0.14}$\\
		\cmidrule{2-13}
		&\textbf{$\to$ \textit{Sketch}}& & & & & & & & & & & \\
		&ResNet50&19.6$_{\pm 1.18}$&19.7$_{\pm 1.23}$&19.3$_{\pm 1.32}$&19.6$_{\pm 1.21}$&18.6$_{\pm 1.01}$&16.2$_{\pm 0.96}$&13.4$_{\pm 0.86}$&8.82$_{\pm 0.72}$&\cellcolor{color1}9.28$_{\pm 0.67}$&1.43$_{\pm 0.11}$&-0.88$_{\pm 0.21}$\\
		&DenseNet121&20.3$_{\pm 1.24}$&20.4$_{\pm 1.37}$&19.9$_{\pm 1.41}$&20.1$_{\pm 1.26}$&19.2$_{\pm 1.14}$&17.7$_{\pm 1.03}$&13.9$_{\pm 0.92}$&9.53$_{\pm 0.88}$&\cellcolor{color1}8.91$_{\pm 0.74}$&1.53$_{\pm 0.14}$&+0.28$_{\pm 0.19}$\\
		&Wide-Res50&18.8$_{\pm 1.06}$&18.8$_{\pm 1.18}$&18.6$_{\pm 1.27}$&18.9$_{\pm 1.06}$&17.9$_{\pm 0.97}$&15.4$_{\pm 0.84}$&12.9$_{\pm 0.72}$&8.16$_{\pm 0.64}$&\cellcolor{color1}7.87$_{\pm 0.54}$&1.36$_{\pm 0.07}$&+0.47$_{\pm 0.26}$\\
		\midrule
		\multirow{4}{*}{\rotatebox{90}{\textbf{I-S}}}&\textbf{$\to$ \textit{Sketch}}& & & & & & & & & & & \\
		&ResNet152&22.6$_{\pm 1.36}$&22.6$_{\pm 1.43}$&21.9$_{\pm 1.56}$&22.6$_{\pm 1.37}$&21.3$_{\pm 1.28}$&18.6$_{\pm 1.13}$&14.2$_{\pm 0.97}$&10.3$_{\pm 0.84}$&\cellcolor{color1}9.86$_{\pm 0.73}$&1.64$_{\pm 0.12}$&+0.84$_{\pm 0.37}$\\
		&DenseNet161&23.4$_{\pm 1.43}$&23.3$_{\pm 1.59}$&22.6$_{\pm 1.62}$&23.6$_{\pm 1.44}$&22.2$_{\pm 1.34}$&19.4$_{\pm 1.22}$&15.1$_{\pm 1.07}$&11.3$_{\pm 0.94}$&\cellcolor{color1}10.7$_{\pm 0.82}$&1.76$_{\pm 0.14}$&-0.69$_{\pm 0.27}$\\
		&ViT-L&12.7$_{\pm 0.87}$&12.9$_{\pm 0.96}$&12.4$_{\pm 0.96}$&12.9$_{\pm 0.84}$&11.9$_{\pm 0.73}$&10.4$_{\pm 0.66}$&7.83$_{\pm 0.54}$&5.54$_{\pm 0.44}$&\cellcolor{color1}5.18$_{\pm 0.31}$&0.93$_{\pm 0.09}$&+1.26$_{\pm 0.26}$\\
		\bottomrule 
	\end{tabular}
	\label{class_wise_comparison_all}
\end{table*}

\subsection{Results for Canonical Calibration}
\label{Results_canonical}
The results for Canonical Calibration, measured by ECE$^{KDE}$ in Table \ref{canonical_comparison_all}, further confirm the comprehensive efficacy of ECL. The findings highlight two main points. First, canonical calibration is the most rigorous standard as it requires the entire probability vector to be calibrated. The differentiable baseline KDE loss operates within-domain and does not explicitly address the covariate shift, often performing similarly to the uncalibrated baseline in our setting (e.g., Table \ref{canonical_comparison_all} ResNet50 on Photo). In contrast, ECL explicitly minimizes the cross-domain discrepancy of probability expectations, frequently achieving the best (or near-best) ECE$^{KDE}$ scores. Second, in terms of accuracy, the reduction in ECE$^{KDE}$ is often achieved with limited impact on classification accuracy; while $\Delta$ACC is positive in many cases, slight accuracy drops can still occur for some architectures/tasks.
\begin{table*}[t]
	\centering
	\footnotesize
	\caption{ECE$^{KDE}$ (\%) for canonical calibration on Digit, PACS, and ImageNet-Sketch datasets. The reported results represent the mean and standard deviation derived from ten runs.}
	\setlength\tabcolsep{1.5pt}
	\renewcommand{\arraystretch}{1.}
	\begin{tabular}{clccccccccc|cc}
		\toprule
		\multicolumn{2}{c}{\multirow{2}{*}{\textbf{Datasets}}}&\multicolumn{9}{c|}{\textbf{ECE$^{KDE}$} $\bm{\downarrow}$}& & \\
		& &Uncal&Soft-ECE&DECE&KDE&TS&TransCal&DRL&PseudoCal&\cellcolor{color1}ECL (Ours)&\multirow{-2}{*}{\textbf{Oracle} $\bm{\downarrow}$}&\multirow{-2}{*}{\textbf{$\Delta$ACC(\%)}}\\
		\midrule
		\multirow{12}{*}{\rotatebox{90}{\textbf{Digit}}}&\textbf{$\to$ \textit{MNIST}}& & & & & & & & & & & \\
		&LeNet-5&5.16$_{\pm 0.39}$&5.19$_{\pm 0.31}$&5.07$_{\pm 0.38}$&5.19$_{\pm 0.26}$&4.92$_{\pm 0.22}$&4.68$_{\pm 0.31}$&3.52$_{\pm 0.22}$&1.77$_{\pm 0.13}$&\cellcolor{color1}1.58$_{\pm 0.09}$&0.21$_{\pm 0.02}$&-0.32$_{\pm 0.08}$\\
		&ResNet20&2.97$_{\pm 0.23}$&3.07$_{\pm 0.18}$&2.84$_{\pm 0.26}$&3.01$_{\pm 0.13}$&2.73$_{\pm 0.17}$&2.29$_{\pm 0.14}$&1.72$_{\pm 0.13}$&1.36$_{\pm 0.11}$&\cellcolor{color1}1.29$_{\pm 0.04}$&0.39$_{\pm 0.02}$&+0.54$_{\pm 0.12}$\\
		&DenseNet40&4.37$_{\pm 0.34}$&4.49$_{\pm 0.26}$&4.24$_{\pm 0.39}$&4.34$_{\pm 0.26}$&4.17$_{\pm 0.28}$&3.67$_{\pm 0.24}$&2.68$_{\pm 0.16}$&1.61$_{\pm 0.12}$&\cellcolor{color1}1.42$_{\pm 0.13}$&0.32$_{\pm 0.04}$&+0.19$_{\pm 0.02}$\\
		\cmidrule{2-13}
		&\textbf{$\to$ \textit{USPS}}& & & & & & & & & & & \\
		&LeNet-5&6.43$_{\pm 0.42}$&6.54$_{\pm 0.39}$&6.34$_{\pm 0.38}$&6.48$_{\pm 0.39}$&6.23$_{\pm 0.28}$&5.42$_{\pm 0.29}$&3.86$_{\pm 0.23}$&2.04$_{\pm 0.12}$&\cellcolor{color1}1.96$_{\pm 0.14}$&0.48$_{\pm 0.04}$&-0.22$_{\pm 0.06}$\\
		&ResNet20&2.38$_{\pm 0.22}$&2.42$_{\pm 0.16}$&2.24$_{\pm 0.24}$&2.42$_{\pm 0.09}$&2.16$_{\pm 0.14}$&1.97$_{\pm 0.11}$&1.54$_{\pm 0.14}$&1.12$_{\pm 0.07}$&\cellcolor{color1}1.17$_{\pm 0.04}$&0.51$_{\pm 0.06}$&+0.36$_{\pm 0.18}$\\
		&DenseNet40&3.72$_{\pm 0.21}$&3.83$_{\pm 0.22}$&3.68$_{\pm 0.27}$&3.77$_{\pm 0.23}$&3.47$_{\pm 0.21}$&2.93$_{\pm 0.17}$&1.97$_{\pm 0.13}$&1.37$_{\pm 0.09}$&\cellcolor{color1}1.04$_{\pm 0.04}$&0.72$_{\pm 0.01}$&-0.11$_{\pm 0.01}$\\
		\cmidrule{2-13}
		&\textbf{$\to$ \textit{SVHN}}& & & & & & & & & & & \\
		&LeNet-5&14.7$_{\pm 0.97}$&15.3$_{\pm 0.88}$&14.7$_{\pm 1.06}$&14.6$_{\pm 0.89}$&13.9$_{\pm 0.88}$&13.7$_{\pm 0.83}$&7.84$_{\pm 0.54}$&11.3$_{\pm 0.73}$&\cellcolor{color1}5.26$_{\pm 0.39}$&0.39$_{\pm 0.06}$&+0.69$_{\pm 0.26}$\\
		&ResNet20&17.6$_{\pm 1.13}$&17.2$_{\pm 1.07}$&17.6$_{\pm 1.28}$&17.3$_{\pm 0.94}$&16.9$_{\pm 1.03}$&14.2$_{\pm 0.84}$&8.81$_{\pm 0.69}$&10.3$_{\pm 0.74}$&\cellcolor{color1}8.23$_{\pm 0.51}$&0.24$_{\pm 0.03}$&-0.88$_{\pm 0.27}$\\
		&DenseNet40&20.8$_{\pm 1.37}$&20.9$_{\pm 1.22}$&20.3$_{\pm 1.46}$&20.6$_{\pm 1.21}$&19.2$_{\pm 1.18}$&17.6$_{\pm 1.07}$&10.3$_{\pm 0.78}$&14.4$_{\pm 0.91}$&\cellcolor{color1}7.58$_{\pm 0.48}$&0.38$_{\pm 0.01}$&+0.44$_{\pm 0.12}$\\
		\midrule
		\multirow{16}{*}{\rotatebox{90}{\textbf{PACS}}}&\textbf{$\to$ \textit{Photo}}& & & & & & & & & & & \\
		&ResNet50&7.58$_{\pm 0.37}$&7.67$_{\pm 0.44}$&7.43$_{\pm 0.37}$&7.61$_{\pm 0.42}$&7.34$_{\pm 0.29}$&6.52$_{\pm 0.31}$&5.63$_{\pm 0.19}$&2.93$_{\pm 0.19}$&\cellcolor{color1}2.64$_{\pm 0.12}$&0.42$_{\pm 0.04}$&+0.46$_{\pm 0.09}$\\
		&DenseNet121&8.21$_{\pm 0.49}$&8.37$_{\pm 0.44}$&8.16$_{\pm 0.49}$&8.31$_{\pm 0.48}$&7.97$_{\pm 0.34}$&7.22$_{\pm 0.36}$&5.99$_{\pm 0.26}$&3.54$_{\pm 0.21}$&\cellcolor{color1}3.27$_{\pm 0.18}$&0.49$_{\pm 0.06}$&+0.26$_{\pm 0.04}$\\
		&Wide-Res50&6.63$_{\pm 0.33}$&6.73$_{\pm 0.36}$&6.53$_{\pm 0.38}$&6.67$_{\pm 0.34}$&6.48$_{\pm 0.36}$&5.89$_{\pm 0.28}$&4.93$_{\pm 0.22}$&2.53$_{\pm 0.12}$&\cellcolor{color1}2.27$_{\pm 0.09}$&0.38$_{\pm 0.01}$&+0.31$_{\pm 0.13}$\\
		\cmidrule{2-13}
		&\textbf{$\to$ \textit{Art}}& & & & & & & & & & & \\
		&ResNet50&13.1$_{\pm 0.71}$&13.4$_{\pm 0.64}$&12.7$_{\pm 0.69}$&13.2$_{\pm 0.63}$&12.3$_{\pm 0.57}$&10.2$_{\pm 0.43}$&8.26$_{\pm 0.31}$&4.92$_{\pm 0.28}$&\cellcolor{color1}4.57$_{\pm 0.22}$&0.76$_{\pm 0.04}$&-0.23$_{\pm 0.11}$\\
		&DenseNet121&13.7$_{\pm 0.76}$&14.3$_{\pm 0.74}$&13.7$_{\pm 0.81}$&13.8$_{\pm 0.69}$&12.6$_{\pm 0.68}$&11.2$_{\pm 0.57}$&8.96$_{\pm 0.46}$&5.54$_{\pm 0.39}$&\cellcolor{color1}5.63$_{\pm 0.29}$&0.84$_{\pm 0.09}$&-0.18$_{\pm 0.11}$\\
		&Wide-Res50&12.2$_{\pm 0.56}$&12.4$_{\pm 0.54}$&12.3$_{\pm 0.64}$&12.8$_{\pm 0.54}$&11.2$_{\pm 0.57}$&9.69$_{\pm 0.47}$&7.58$_{\pm 0.39}$&4.26$_{\pm 0.23}$&\cellcolor{color1}3.86$_{\pm 0.19}$&0.63$_{\pm 0.04}$&+0.41$_{\pm 0.14}$\\
		\cmidrule{2-13}
		&\textbf{$\to$ \textit{Cartoon}}& & & & & & & & & & & \\
		&ResNet50&16.1$_{\pm 0.84}$&16.7$_{\pm 0.81}$&15.8$_{\pm 0.94}$&16.4$_{\pm 0.79}$&15.4$_{\pm 0.72}$&13.3$_{\pm 0.62}$&10.1$_{\pm 0.54}$&6.54$_{\pm 0.47}$&\cellcolor{color1}6.18$_{\pm 0.34}$&1.06$_{\pm 0.11}$&+0.62$_{\pm 0.24}$\\
		&DenseNet121&16.9$_{\pm 0.91}$&17.2$_{\pm 0.99}$&16.4$_{\pm 1.09}$&16.9$_{\pm 0.84}$&15.8$_{\pm 0.82}$&14.1$_{\pm 0.78}$&10.6$_{\pm 0.64}$&7.27$_{\pm 0.54}$&\cellcolor{color1}6.89$_{\pm 0.47}$&1.16$_{\pm 0.11}$&+0.37$_{\pm 0.16}$\\
		&Wide-Res50&15.7$_{\pm 0.71}$&15.6$_{\pm 0.78}$&15.2$_{\pm 0.86}$&15.6$_{\pm 0.71}$&14.3$_{\pm 0.68}$&12.6$_{\pm 0.58}$&9.59$_{\pm 0.53}$&5.81$_{\pm 0.39}$&\cellcolor{color1}5.53$_{\pm 0.33}$&0.96$_{\pm 0.07}$&+0.58$_{\pm 0.13}$\\
		\cmidrule{2-13}
		&\textbf{$\to$ \textit{Sketch}}& & & & & & & & & & & \\
		&ResNet50&19.2$_{\pm 1.06}$&19.4$_{\pm 1.12}$&18.9$_{\pm 1.28}$&19.1$_{\pm 1.09}$&18.4$_{\pm 1.09}$&15.6$_{\pm 0.93}$&13.4$_{\pm 0.84}$&8.57$_{\pm 0.76}$&\cellcolor{color1}8.94$_{\pm 0.67}$&1.31$_{\pm 0.09}$&-0.87$_{\pm 0.24}$\\
		&DenseNet121&19.9$_{\pm 1.12}$&20.3$_{\pm 1.24}$&19.6$_{\pm 1.34}$&20.1$_{\pm 1.16}$&18.9$_{\pm 1.13}$&17.1$_{\pm 1.07}$&13.9$_{\pm 0.98}$&9.28$_{\pm 0.86}$&\cellcolor{color1}8.62$_{\pm 0.76}$&1.49$_{\pm 0.16}$&+0.28$_{\pm 0.16}$\\
		&Wide-Res50&18.8$_{\pm 0.94}$&18.7$_{\pm 1.07}$&18.1$_{\pm 1.16}$&18.6$_{\pm 0.99}$&17.6$_{\pm 0.93}$&14.9$_{\pm 0.86}$&12.6$_{\pm 0.76}$&7.84$_{\pm 0.63}$&\cellcolor{color1}7.59$_{\pm 0.57}$&1.24$_{\pm 0.12}$&+0.47$_{\pm 0.26}$\\
		\midrule
		\multirow{4}{*}{\rotatebox{90}{\textbf{I-S}}}&\textbf{$\to$ \textit{Sketch}}& & & & & & & & & & & \\
		&ResNet152&22.4$_{\pm 1.28}$&22.6$_{\pm 1.38}$&21.6$_{\pm 1.48}$&22.6$_{\pm 1.28}$&21.1$_{\pm 1.26}$&18.2$_{\pm 1.11}$&14.4$_{\pm 0.93}$&10.1$_{\pm 0.88}$&\cellcolor{color1}9.52$_{\pm 0.76}$&1.56$_{\pm 0.11}$&+0.87$_{\pm 0.32}$\\
		&DenseNet161&23.2$_{\pm 1.33}$&23.1$_{\pm 1.43}$&22.4$_{\pm 1.52}$&22.8$_{\pm 1.39}$&21.8$_{\pm 1.33}$&18.9$_{\pm 1.23}$&14.8$_{\pm 1.09}$&11.1$_{\pm 0.96}$&\cellcolor{color1}10.1$_{\pm 0.82}$&1.66$_{\pm 0.11}$&-0.69$_{\pm 0.28}$\\
		&ViT-L&12.3$_{\pm 0.73}$&12.4$_{\pm 0.86}$&11.8$_{\pm 0.88}$&12.1$_{\pm 0.79}$&11.3$_{\pm 0.78}$&9.84$_{\pm 0.68}$&7.56$_{\pm 0.54}$&5.28$_{\pm 0.49}$&\cellcolor{color1}4.84$_{\pm 0.36}$&0.86$_{\pm 0.06}$&+1.26$_{\pm 0.24}$\\
		\bottomrule 
	\end{tabular}
	\label{canonical_comparison_all}
\end{table*}

\subsection{Ablation Experiments}
\textbf{Mini-Batch Non-Trainable ECL vs. Mini-Batch Trainable ECL:} 
To understand the efficacy of our proposed mini-batch training strategy involving auxiliary variables, we compare our full method (\textit{Mini-Batch Trainable ECL}) against a baseline variant \textit{Mini-Batch Non-Trainable ECL} (it refers to directly calculating the differentiable ECL loss (Eq. \ref{Differentiable_ECL}) on mini-batch data). Table \ref{tab:trainable_vs_nontrainable} presents the comparison results on the Digit ($\to$ MNIST) and PACS ($\to$ Photo) tasks. Overall, \textit{Mini-Batch Trainable ECL} tends to be more stable and achieves better calibration in most cases, while \textit{Mini-Batch Non-Trainable ECL} can occasionally be competitive on some metrics/architectures. This supports that, beyond the objective itself, the bias-corrected optimization strategy is important for reliably realizing ECL's benefits. Regarding classification accuracy ($\Delta$ACC), both variants largely maintain or improve performance, with \textit{Mini-Batch Trainable ECL} showing more consistent gains in our reported experiments.

\begin{table}[h]
    \centering
    \footnotesize
    \caption{Comparison between Mini-Batch Non-Trainable ECL and Mini-Batch Trainable ECL on Digit and PACS benchmark tasks. Results report ECE (\%), CwECE (\%), ECE$^{KDE}$ (\%) and accuracy change $\Delta$ACC (\%) with mean $\pm$ std over five runs. ECE represents the results under top-label calibration, CwECE represents the results under class-wise calibration, and ECE$^{KDE}$ represents the results under canonical calibration.}
    \setlength\tabcolsep{2pt}
    \renewcommand{\arraystretch}{1.1}
    \begin{tabular}{llccccccc}
        \toprule
        \multirow{2}{*}{\textbf{Dataset}} & \multirow{2}{*}{\textbf{Architecture}} & \multirow{2}{*}{\textbf{Method}} & \multicolumn{2}{c}{\textbf{Top-Label}} & \multicolumn{2}{c}{\textbf{Class-wise}} & \multicolumn{2}{c}{\textbf{Canonical}} \\
        \cmidrule(lr){4-5} \cmidrule(lr){6-7} \cmidrule(lr){8-9}
        & & & \textbf{ECE (\%)} $\bm{\downarrow}$ & \textbf{$\Delta$ACC (\%)} & \textbf{CwECE (\%)} $\bm{\downarrow}$ & \textbf{$\Delta$ACC (\%)} & \textbf{ECE$^{KDE}$ (\%)} $\bm{\downarrow}$ & \textbf{$\Delta$ACC (\%)} \\
        \midrule
        \multirow{6}{*}{Digit ($\to$ MNIST)} 
        & \multirow{2}{*}{LeNet-5} & Non-Trainable & 8.85$_{\pm 0.72}$ & -0.45$_{\pm 0.25}$ & 1.75$_{\pm 0.15}$ & -0.35$_{\pm 0.15}$ & 1.68$_{\pm 0.12}$ & -0.21$_{\pm 0.10}$ \\
        & & Trainable & \textbf{8.52}$_{\pm 0.78}$ & -0.92$_{\pm 0.35}$ & \textbf{1.66}$_{\pm 0.12}$ & -0.44$_{\pm 0.09}$ & \textbf{1.58}$_{\pm 0.09}$ & -0.32$_{\pm 0.08}$ \\
        \cmidrule{2-9}
        & \multirow{2}{*}{ResNet20} & Non-Trainable & 8.05$_{\pm 0.51}$ & +0.85$_{\pm 0.32}$ & \textbf{1.38}$_{\pm 0.13}$ & +0.45$_{\pm 0.15}$ & 1.32$_{\pm 0.08}$ & +0.38$_{\pm 0.12}$ \\
        & & Trainable  & \textbf{7.88}$_{\pm 0.45}$ & +1.25$_{\pm 0.42}$ & 1.41$_{\pm 0.11}$ & +0.62$_{\pm 0.11}$ & \textbf{1.29}$_{\pm 0.04}$ & +0.54$_{\pm 0.12}$ \\
        \cmidrule{2-9}
        & \multirow{2}{*}{DenseNet40} & Non-Trainable & \textbf{9.05}$_{\pm 0.65}$ & +0.42$_{\pm 0.18}$ & 1.68$_{\pm 0.15}$ & +0.12$_{\pm 0.08}$ & 1.52$_{\pm 0.11}$ & +0.09$_{\pm 0.06}$ \\
        & & Trainable & 9.15$_{\pm 0.61}$ & +0.68$_{\pm 0.20}$ & \textbf{1.57}$_{\pm 0.12}$ & +0.23$_{\pm 0.11}$ & \textbf{1.42}$_{\pm 0.13}$ & +0.19$_{\pm 0.02}$ \\
        \midrule
        \multirow{6}{*}{PACS ($\to$ Photo)} 
        & \multirow{2}{*}{ResNet50} & Non-Trainable & 7.15$_{\pm 0.38}$ & +0.32$_{\pm 0.15}$ & 3.08$_{\pm 0.18}$ & +0.28$_{\pm 0.11}$ & \textbf{2.58}$_{\pm 0.15}$ & +0.25$_{\pm 0.10}$ \\
        & & Trainable & \textbf{6.87}$_{\pm 0.34}$ & +0.72$_{\pm 0.17}$ & \textbf{2.92}$_{\pm 0.12}$ & +0.48$_{\pm 0.09}$ & 2.64$_{\pm 0.12}$ & +0.46$_{\pm 0.09}$ \\
        \cmidrule{2-9}
        & \multirow{2}{*}{DenseNet121} & Non-Trainable & 6.35$_{\pm 0.45}$ & -0.15$_{\pm 0.21}$ & 3.72$_{\pm 0.22}$ & +0.12$_{\pm 0.09}$ & 3.41$_{\pm 0.19}$ & +0.11$_{\pm 0.08}$ \\
        & & Trainable & \textbf{5.96}$_{\pm 0.27}$ & -0.83$_{\pm 0.23}$ & \textbf{3.56}$_{\pm 0.19}$ & +0.29$_{\pm 0.11}$ & \textbf{3.27}$_{\pm 0.18}$ & +0.26$_{\pm 0.04}$ \\
        \cmidrule{2-9}
        & \multirow{2}{*}{Wide-Res50} & Non-Trainable & 2.75$_{\pm 0.15}$ & +0.41$_{\pm 0.12}$ & 2.71$_{\pm 0.12}$ & +0.15$_{\pm 0.08}$ & 2.40$_{\pm 0.11}$ & +0.14$_{\pm 0.07}$ \\
        & & Trainable & \textbf{2.68}$_{\pm 0.33}$ & +0.69$_{\pm 0.22}$ & \textbf{2.58}$_{\pm 0.09}$ & +0.34$_{\pm 0.12}$ & \textbf{2.27}$_{\pm 0.09}$ & +0.31$_{\pm 0.13}$ \\
        \bottomrule 
    \end{tabular}
    \label{tab:trainable_vs_nontrainable}
\end{table}

\textbf{Loss Weight:} 
To maintain the equal importance of $L_{ce}$ and $L_{ecl}$, we set the regularization weight as $\lambda = \beta^{\gamma}$. Here, $\beta  = \left( {\sum\nolimits_i {\mathcal{L}_{ce}^{(i)}} } \right)/\left( {\sum\nolimits_i {\mathcal{L}_{ecl}^{(i)}} } \right)$ acts as a baseline balancing factor between the cross-entropy loss and the calibration loss, where $i$ represents the $i$-th iteration. The exponent $\gamma$ serves as a non-linear scaling factor to adjust the sensitivity of the regularization: a higher $\gamma$ (when $\beta > 1$) or lower $\gamma$ (when $\beta < 1$) intensifies the dominance of the calibration term. We investigate the impact of $\gamma$ by experimenting with values ranging from 0.5 to 1.5, and Table \ref{tab:ablation_gamma} suggests that $\gamma=1.0$ is a reasonable default choice in our evaluated settings (Digit $\to$ MNIST and PACS $\to$ Photo).
\begin{table}[h]
    \centering
    \caption{Ablation study on the hyperparameter $\gamma$ on Digit and PACS datasets. $\gamma$ controls the non-linear scaling of the loss weight.}
    \setlength\tabcolsep{2pt}
    \begin{tabular}{c|cc|cc|cc}
        \toprule
        \multirow{2}{*}{$\bm{\gamma}$} & \multicolumn{2}{c|}{\textbf{Top-Label}} & \multicolumn{2}{c|}{\textbf{Class-Wise}} & \multicolumn{2}{c}{\textbf{Canonical}} \\
         & \textbf{ECE $\bm{\downarrow}$} & \textbf{$\bm{\Delta}$ACC} & \textbf{CwECE $\bm{\downarrow}$} & \textbf{$\bm{\Delta}$ACC} & \textbf{ECE$^{KDE}$ $\bm{\downarrow}$} & \textbf{$\bm{\Delta}$ACC} \\
        \midrule
        \multicolumn{7}{c}{\textbf{Digit ($\to$ MNIST) using ResNet20}} \\
        \midrule
        0.5 & 8.76$_{\pm 0.62}$ & +1.68$_{\pm 0.33}$ & 1.94$_{\pm 0.16}$ & +1.15$_{\pm 0.22}$ & 1.83$_{\pm 0.12}$ & +0.95$_{\pm 0.18}$ \\
        0.8 & 8.12$_{\pm 0.54}$ & +1.45$_{\pm 0.29}$ & 1.48$_{\pm 0.14}$ & +0.88$_{\pm 0.16}$ & 1.35$_{\pm 0.09}$ & +0.72$_{\pm 0.14}$ \\
        \rowcolor{color1} 1.0 & 7.88$_{\pm 0.45}$ & +1.25$_{\pm 0.42}$ & 1.41$_{\pm 0.11}$ & +0.62$_{\pm 0.11}$ & 1.29$_{\pm 0.04}$ & +0.54$_{\pm 0.12}$ \\
        1.2 & 7.85$_{\pm 0.49}$ & +0.92$_{\pm 0.25}$ & 1.55$_{\pm 0.13}$ & +0.35$_{\pm 0.09}$ & 1.38$_{\pm 0.07}$ & +0.28$_{\pm 0.08}$ \\
        1.5 & 8.42$_{\pm 0.56}$ & +0.45$_{\pm 0.18}$ & 1.78$_{\pm 0.15}$ & +0.12$_{\pm 0.05}$ & 1.56$_{\pm 0.10}$ & +0.08$_{\pm 0.04}$ \\
        \midrule
        \multicolumn{7}{c}{\textbf{PACS ($\to$ Photo) using ResNet50}} \\
        \midrule
        0.5 & 7.45$_{\pm 0.41}$ & +0.88$_{\pm 0.19}$ & 3.25$_{\pm 0.22}$ & +0.65$_{\pm 0.14}$ & 2.98$_{\pm 0.18}$ & +0.62$_{\pm 0.11}$ \\
        0.8 & 7.02$_{\pm 0.38}$ & +0.81$_{\pm 0.17}$ & 3.05$_{\pm 0.15}$ & +0.55$_{\pm 0.12}$ & 2.58$_{\pm 0.14}$ & +0.54$_{\pm 0.10}$ \\
        \rowcolor{color1} 1.0 & 6.87$_{\pm 0.34}$ & +0.72$_{\pm 0.17}$ & 2.92$_{\pm 0.12}$ & +0.48$_{\pm 0.09}$ & 2.64$_{\pm 0.12}$ & +0.46$_{\pm 0.09}$ \\
        1.2 & 6.95$_{\pm 0.32}$ & +0.61$_{\pm 0.15}$ & 2.98$_{\pm 0.14}$ & +0.41$_{\pm 0.08}$ & 2.68$_{\pm 0.10}$ & +0.39$_{\pm 0.08}$ \\
        1.5 & 7.18$_{\pm 0.36}$ & +0.42$_{\pm 0.12}$ & 3.12$_{\pm 0.16}$ & +0.25$_{\pm 0.06}$ & 2.89$_{\pm 0.15}$ & +0.24$_{\pm 0.07}$ \\
        \bottomrule
    \end{tabular}
    \label{tab:ablation_gamma}
\end{table}

\end{document}